\renewcommand\footnotetextcopyrightpermission[1]{} 
\theoremstyle{definition}
\newtheorem{definition}{Definition}
\newtheorem{prop}{Proposition}
\newtheorem{theorem}{Theorem}
\newcommand\independent{\protect\mathpalette{\protect\independenT}{\perp}}
\def\independenT#1#2{\mathrel{\rlap{$#1#2$}\mkern2mu{#1#2}}}
\begin{document}

\title{FairGAN: Fairness-aware Generative Adversarial Networks}

\author{Depeng Xu}
\affiliation{
	\institution{University of Arkansas}
}
\email{depengxu@uark.edu}
\author{Shuhan Yuan}
\affiliation{
	\institution{University of Arkansas}
}
\email{sy005@uark.edu}
\author{Lu Zhang}
\affiliation{
	\institution{University of Arkansas}
}
\email{lz006@uark.edu}
\author{Xintao Wu}
\affiliation{
	\institution{University of Arkansas}
}
\email{xintaowu@uark.edu}

\begin{abstract}
Fairness-aware learning is increasingly important in data mining. Discrimination prevention aims to prevent discrimination in the training data before it is used to conduct predictive analysis. In this paper, we focus on fair data generation that ensures the generated data is discrimination free. Inspired by generative adversarial networks (GAN), we present fairness-aware generative adversarial networks, called FairGAN, which are able to learn a generator producing fair data and also preserving good data utility. Compared with the naive fair data generation models, FairGAN further ensures the classifiers which are trained on generated data can achieve fair classification on real data. Experiments on a real dataset show the effectiveness of FairGAN.
\end{abstract}

\keywords{fairness-aware learning, generative adversarial networks, FairGAN}

\maketitle

\section{Introduction}

Discrimination refers to unjustified distinctions in decisions against individuals based on their membership in a certain group. 
Currently, many organizations or institutes adopt machine learning models trained on historical data to automatically make decisions, including hiring, lending and policing \cite{Joseph2016Fairness}.
However, many studies have shown that machine learning models have biased performance against the \textit{protected group} \cite{Beutel2017Data,Binns2017Fairness,Bolukbasi2016Man}. In principle, if a dataset has discrimination against the protected group, the predictive model simply trained on the dataset will incur discrimination. 

Many approaches aim to mitigate discrimination from historical datasets. A general requirement of modifying datasets is to preserve the data utility while removing the discrimination. Some methods mainly modify the labels of the dataset \cite{Zhang2017Causal,Kamiran2009Classifying}. Some methods also revise the attributes of data other than the label, such as the Preferential Sampling \cite{Kamiran2012Data} and the Disparate Impact Removal \cite{Feldman2015Certifying}. In addition, some methods prevent discrimination by learning fair representations from a dataset \cite{Edwards2015Censoring,Zemel2013Learning,Louizos2016Variational}.

In this work, instead of removing the discrimination from the existing dataset, we focus on generating fair data.
Generative adversarial networks (GAN) have demonstrated impressive performance on modeling the real data distribution and generating high quality synthetic data that are similar to real data \cite{Goodfellow2014Generative,Radford2015Unsupervised}. After generating high quality synthetic data, many approaches adopt the synthetic dataset to conduct predictive analysis instead of using the real data, especially when the real data is very limited \cite{Choi2017Generating}. However, due to high similarity between the real data and synthetic data, if the real data incur discrimination, the synthetic data can also incur discrimination. The following predictive analysis which is based on the synthetic data can be subject to discrimination. 

Throughout the paper, for ease of representation, we assume that there is only one protected attribute,  which is a binary attribute associated with the domain values of the unprotected group  and the protected group. We also assume there is one binary decision attribute associated with the domain values of the positive decision and the negative decision. Formally, let $\mathcal{D} = \{\mathcal{X}, \mathcal{Y}, \mathcal{S}\}$ be a historical dataset where $\mathcal{X} \in \mathbb{R}^n$ is the unprotected attributes, $\mathcal{Y} \in \{0,1\}$ is the decision, and $\mathcal{S} \in \{0,1\}$ is the protected attribute. We aim to generate a fair dataset $\hat{\mathcal{D}} = \{\hat{\mathcal{X}}, \hat{\mathcal{Y}}, \hat{\mathcal{S}}\}$. In principle, the generated fair data $\hat{\mathcal{D}}$ should meet following requirements: 1) \textbf{data utility} which indicates the generated data should preserve the general relationship between attributes and decision in the real data; 2) \textbf{data fairness} which indicates there is no discrimination in the generated data; 3) \textbf{classification utility} which indicates classifiers trained on the generated data should achieve high accuracy when deployed for decision prediction of future real data; 4) \textbf{classification fairness} which indicates classifiers trained on the generated data should not incur discrimination when predicting on real data. 

To achieve high data utility and classification utility, the generated data should be close to the real data, i.e., $\mathcal{D} \approx \hat{\mathcal{D}}$. To achieve data fairness and classification fairness, the generated data should be free from discrimination. In particular, there are two types of discrimination in the literature, disparate treatment and disparate impact. 
\textbf{Disparate treatment} indicates the discriminatory outcomes are due to explicitly using the protected attribute to make decisions. Hence, data fairness can be achieved if the generated decision has no correlation with the generated protected attribute. 
\textbf{Disparate impact} indicates the discriminatory outcomes are not explicitly from the protected attribute but from the proxy unprotected attributes. Theoretical analysis shows that the disparate impact is caused by the correlation between the unprotected attributes and the protected attribute \cite{Feldman2015Certifying}. Hence, to achieve classification fairness, a certifying framework proposed in \cite{Feldman2015Certifying} indicates that the dataset adopted to train classifiers should be free from the \textit{disparate impact}. 


It is nontrivial to achieve the four requirements simultaneously. For example, a naive approach keeps the generated unprotected attributes and decision close to the real data $\{\hat{\mathcal{X}}, \hat{\mathcal{Y}}\} = \{\mathcal{X}, \mathcal{Y}\}$ but randomly sets the protected attribute $\hat{\mathcal{S}}$. With this strategy, the disparate treatment in the dataset $\hat{\mathcal{D}} = \{\hat{\mathcal{X}}, \hat{\mathcal{Y}}, \hat{\mathcal{S}}\}$ is removed, so the data fairness can be achieved. However, the classification fairness on real data cannot be ensured since the classifier may preserve the potential disparate impact from unprotected attributes. This is because when $\hat{\mathcal{X}} = \mathcal{X}$, given the generated unprotected attributes $\hat{\mathcal{X}}$, their corresponding real protected attribute $\mathcal{S}$ is predictable. The disparate impact is not mitigated from the generated dataset.


We develop fairness-aware generative adversarial networks (FairGAN) for fair data generation. 
Besides generating synthetic samples that match the distribution of real data, we also aim to prevent discrimination in the generated dataset. In particular, the generated data should be free from disparate treatment and disparate impact. FairGAN consists of one generator and two discriminators. The generator generates fake samples $\{\hat{\mathcal{X}}, \hat{\mathcal{Y}}\}$ conditioned on the protected attribute $\mathcal{S}$. One discriminator aims to ensure the generated data $\{\hat{\mathcal{X}}, \hat{\mathcal{Y}}, \hat{\mathcal{S}}\}$ close to the real data $\{\mathcal{X}, \mathcal{Y}, \mathcal{S}\}$ while the other discriminator aims to ensure there are no correlation between $\hat{\mathcal{X}}$ and  $\hat{\mathcal{S}}$ and no correlation between $\hat{\mathcal{Y}}$ and $\hat{\mathcal{S}}$. Note that $\hat{\mathcal{S}} = \mathcal{S}$ since generator is conditioned on $\mathcal{S}$. Hence, unlike the naive method, which has $\hat{\mathcal{X}} = \mathcal{X}$ and randomly generates $ \hat{\mathcal{S}}$, FairGAN generates revised unprotected attributes $\hat{\mathcal{X}}$ and decision $\hat{\mathcal{Y}}$ given the protected attribute $\hat{\mathcal{S}}$ ($\mathcal{S}$) and achieves $\hat{\mathcal{X}} \independent \mathcal{S}$ and $\hat{\mathcal{Y}} \independent \mathcal{S}$. Therefore, the generated data can meet requirements of data fairness and classification fairness. The experimental results show that FairGAN can achieve fair data generation with good data utility and free from disparate impact, so the classifiers trained on the synthetic datasets can achieve fair classification on the real data with high accuracy.


\section{Related Work}

With the widely adoption of automated decision making systems, fairness-aware learning or anti-discrimination learning becomes an increasingly important task. In fairness-aware learning,  discrimination prevention aims to remove discrimination by modifying the biased data and/or the predictive algorithms built on the data. Many approaches have been proposed for constructing discrimination-free classifiers, which can be broadly classified into three categories: the pre-process approaches that modify the training data to remove discriminatory effect before conducting predictive analytics \cite{Calders2009Building,Dwork2011Fairness,Zhang2017Causal,Feldman2015Certifying}, the in-process approaches that enforce fairness to  classifiers by introducing constraints or regularization terms to the objective functions \cite{Kamishima2011FairnessAware,Zafar2017Fairness}, and the post-process approaches that directly change the predicted labels \cite{Kamiran2010Discrimination,Hardt2016Equality}. 

The pre-process approaches that modify the training data are widely studied. The fundamental assumption of the pre-process methods is that, once a classifier is trained on a discrimination-free dataset, the prediction made by the classifier will also be discrimination free \cite{Kamiran2009Classifying}. Research in \cite{Kamiran2012Data} proposed several methods for modifying data including Massaging, which corrects the labels of some individuals in the data, Reweighting, which assigns weights to individuals to balance the data, and Sampling, which changes the sample sizes of different subgroups to remove the discrimination in the data. In \cite{Feldman2015Certifying}, authors further studied how to remove disparate impact by modifying the distribution of the unprotected attributes such that the protected attribute cannot be estimated from the unprotected attributes. Research in \cite{Zhang2017Achieving} proposed a causal graph based approach that removes discrimination based on the block set and ensures that there is no discrimination in any meaningful partition. For the in-process approaches, some tweak or regularizers are applied to the classifier to penalize discriminatory prediction during the training process. In principle, preventing discrimination when training a classifier consists of balancing two contrasting objectives: maximizing the accuracy of the extracted predictive model and minimizing the number of predictions that are discriminatory. Research in \cite{Dwork2011Fairness} proposed a predictive model for maximizing utility subject to the fair constraint that achieves both statistical parity and individual fairness, i.e., similar individuals should be treated similarly. In \cite{Hardt2016Equality}, authors proposed a framework for optimally adjusting any predictive model so as to remove discrimination.

Recently, several studies have been proposed to remove discrimination through adversarial training. Research in \cite{Edwards2015Censoring} incorporated an adversarial model to learn a discrimination free representation. Based on that, research in \cite{Beutel2017Data}  studies how the choice of data for the adversarial training affects the fairness. Studies in \cite{Madras2018Learning,Zhang2018Mitigating} further proposed various adversarial objectives to achieve different levels of group fairness including demographic parity, equalized odds and equal opportunity. This paper studies how to generate a discrimination free dataset while still preserving the data generation utility. Fair data generation is in line with the pre-process approaches. The classical pre-process methods like Massaging cannot remove disparate treatment and disparate impact, and the certifying framework, which can remove the disparate impact, can only apply on numerical attributes. On the contrary, FairGAN can remove the disparate treatment and disparate impact from both numerical and categorical data. Meanwhile, compared with the pre-process methods, FairGAN can generate more data for training predictive models, especially when the original training data is very limited.

\section{Preliminary}

\subsection{Fairness and Discrimination}

In fairness-aware learning, the literature has studied notions of group fairness on dataset and classification \cite{Hardt2016Equality,Kamiran2012Data}. 

\begin{definition}[Statistical Parity/Fairness in a Labeled Dataset]
	Given a labeled dataset $\mathcal{D}$, the property of statistical parity or fairness in the labeled dataset is defined as:
	\[
		P(y=1|s=1)=P(y=1|s=0)
	\]
\end{definition}

The discrimination in a labeled dataset w.r.t the protected attribute $\mathcal{S}$ is evaluated by the risk difference: $disc(\mathcal{D}) = P(y=1|s=1)-P(y=1|s=0)$.

\begin{definition}[Statistical Parity/Fairness in a Classifier]
	Given a labeled dataset $\mathcal{D}$ and a classifier $\eta: \mathcal{X} \rightarrow \mathcal{Y}$, the property of statistical parity or fairness in a classifier is defined as:
	\[
		P(\eta(\mathbf{x})=1|s=1)=P(\eta(\mathbf{x})=1|s=0)
	\]
\end{definition}

We can then derive the \textit{discrimination in a classifier} in terms of risk difference as $disc(\eta) = P(\eta(\mathbf{x})=1|s=1)-P(\eta(\mathbf{x})=1|s=0)$.

The classification fairness on a dataset is achieved if both the \textit{disparate treatment} and \textit{disparate impact} are removed from the data. 
To remove the disparate treatment, the classifier cannot use the protected attribute to make decisions. As for the disparate impact,
research in \cite{Feldman2015Certifying} proposed the concept of $\epsilon$-fairness to examine the potential disparate impact. 

\begin{definition}[$\epsilon$-fairness \cite{Feldman2015Certifying}]
	A labeled dataset $\mathcal{D}=(\mathcal{X}, \mathcal{Y}, \mathcal{S})$ is said to be $\epsilon$-fair if for any classification algorithm $f: \mathcal{X} \rightarrow \mathcal{S}$
	\[	
		BER(f(\mathcal{X}), \mathcal{S}) > \epsilon
	\]
	with empirical probabilities estimated from $\mathcal{D}$, where $BER$ (balanced error rate) is defined as
	\[
		BER(f(\mathcal{X}), \mathcal{S}) = \frac{P[f(\mathcal{X})=0|\mathcal{S}=1]+P[f(\mathcal{X})=1|\mathcal{S}=0]}{2}.
	\]
	$BER$  indicates the average class-conditioned error of $f$ on distribution $\mathcal{D}$ over the pair $(\mathcal{X}, \mathcal{S})$.
\end{definition}

The $\epsilon$-fairness quantifies the fairness of data through the error rate of predicting the protected attribute $\mathcal{S}$ given the unprotected attributes $\mathcal{X}$. If the error rate is low, it means $\mathcal{S}$ is predictable by $\mathcal{X}$. 
In the fair data generation scenario, for a classifier trained on the synthetic dataset and tested on the real dataset,
the classification fairness is achieved if  disparate impact in terms of the real protected attribute is removed from the synthetic dataset, i.e. $\hat{\mathcal{X}}\independent \mathcal{S}$. 

\subsection{Generative Adversarial Network}

Generative adversarial nets (GAN) are generative models that consist of two components: a generator $G$ and a discriminator $D$. Typically, both $G$ and $D$ are multilayer neural networks. $G(\mathbf{z})$ generates fake samples from a prior distribution $P_{\mathbf{z}}$ on a noise variable $\mathbf{z}$ and learns a generative distribution $P_G$ to match the real data distribution $P_{\text{data}}$. The discriminative component $D$ is a binary classifier that predicts whether an input is real data $\mathbf{x}$ or fake data generated from $G(\mathbf{z})$. The objective function of $D$ is defined as:
\begin{equation}
\label{eq:reg_d_loss}
\max\limits_D \quad \mathbb{E}_{\mathbf{x} \sim P_{\text{data}}}[\log D(\mathbf{x})] + \mathbb{E}_{\mathbf{z} \sim P_{\mathbf{z}}}[\log(1-D(G(\mathbf{z})))],
\end{equation}
where $D(\cdot)$ outputs the probability that $\cdot$ is from the real data rather than the generated fake data. In order to make the generative distribution $P_G$ close to the real data distribution $P_{\text{data}}$, $G$ is trained by fooling the discriminator unable to distinguish the generated data from the real data. Thus, the objective function of $G$ is defined as:
\begin{equation}
\label{eq:reg_g_loss}
\min\limits_G \quad \mathbb{E}_{\mathbf{z} \sim P_{\mathbf{z}}}[\log(1-D(G(\mathbf{z})))].
\end{equation}
Minimization of Equation \ref{eq:reg_g_loss} ensures that the discriminator is fooled by $G(\mathbf{z})$ and  $D$  predicts high probability that $G(\mathbf{z})$ is real data.

Overall, GAN is formalized as a minimax game $\min\limits_G \max\limits_D V(G,D)$ with the value function:
\begin{equation}
V(G,D)= \mathbb{E}_{\mathbf{x} \sim P_{\text{data}}}[\log D(\mathbf{x})] + \mathbb{E}_{\mathbf{z} \sim P_{\mathbf{z}}}[\log(1-D(G(\mathbf{z})))].
\end{equation}

Figure \ref{fig:gan} illustrates the structure of GAN. Theoretical analysis shows that GAN aims to minimize the Jensen-Shannon divergence (JSD) between $P_{\text{data}}$ and $P_G$ \cite{Goodfellow2014Generative}. Minimization of the JSD is achieved when $P_G = P_{\text{data}}$. 

\begin{figure}
	\centering
	\includegraphics[width=0.38\textwidth]{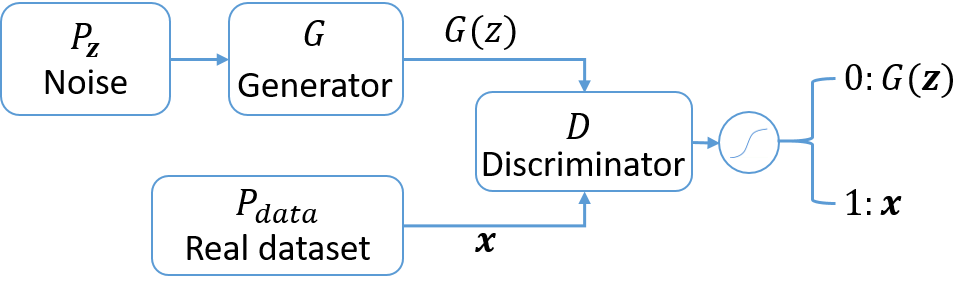}
	\caption{Illustration of generative adversarial networks}
	\label{fig:gan}
\end{figure}

{\bf \noindent GAN for discrete data generation.}
The generator of a regular GAN cannot generate discrete samples because $G$ is trained by the loss from  $D$ via backpropagation \cite{Goodfellow2014Generative}. In order to tackle this limitation, medGAN incorporates an autoencoder model in a regular GAN model to generate high-dimensional discrete variables \cite{Choi2017Generating}. 


Autoencoder is a feedforward neural network used for unsupervised learning. A basic autoencoder consists of an encoder $Enc$ and a decoder $Dec$. Both the encoder and decoder are multilayer neural networks. Given an input $\mathbf{x} \in \mathbb{R}^n$, the encoder computes the hidden representation of an input $Enc(\mathbf{x}) \in \mathbb{R}^h$, and the decoder computes the reconstructed input $Dec(Enc(\mathbf{x})) \in \mathbb{R}^n$ based on the hidden representation. To train the autoencoder model, the objective function of the autoencoder is to make the reconstructed input close to the original input:
\begin{equation}
	\mathcal{L}_{AE} = ||\mathbf{x}' - \mathbf{x}||_2^2,
\end{equation}
where $\mathbf{x}' = Dec(Enc(\mathbf{x}))$. Because the hidden representation can be used to reconstruct the original input, it captures the salient information of the input.

To generate the dataset which contains discrete attributes, the generator $G_{Dec}$ in medGAN consists of two components, the  generator $G$ and the decoder $Dec$. The generator $G$ is trained to generate the salient representations. The decoder $Dec$ from autoencoder seeks to construct the synthetic data from the salient representations $Dec(G(\mathbf{z}))$. Hence, the generator of medGAN $G_{Dec}(\mathbf{z})$ is defined as:
\[
	G_{Dec}(\mathbf{z}) = Dec(G(\mathbf{z})),
\]
where $\mathbf{z}$ is a noise variable. The discriminator $D$ aims to distinguish whether the input is from real data or $Dec(G(\mathbf{z}))$. With playing the adversarial game with $D$, the gradient backpropagates from $D$ to the decoder $Dec$ and further to the generator $G$ in an end-to-end manner. Hence, the generator $G_{Dec}$ can be viewed as a regular generator $G$ with extra hidden layers that maps continuous salient representations to discrete samples. 

\section{FairGAN}

\begin{figure}
	\centering
	\includegraphics[width=0.32\textwidth]{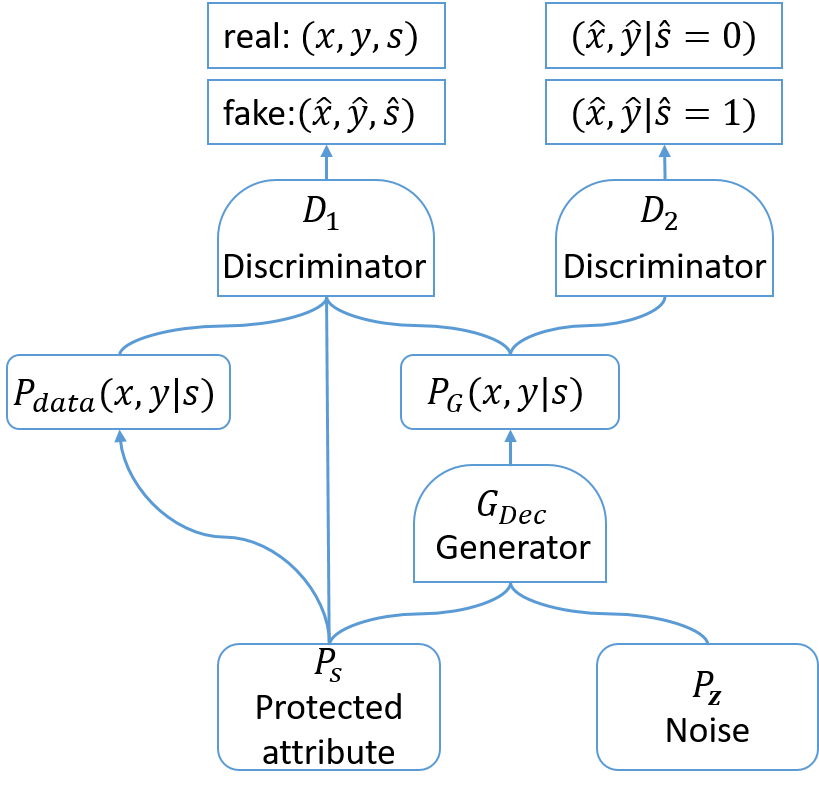}
	\caption{The Structure of FairGAN}
	\label{fig:fairgan}
\end{figure}

\subsection{Problem Statement}

Given a dataset $\{\mathcal{X}, \mathcal{Y}, \mathcal{S}\} \sim P_{\text{data}}$, FairGAN aims to generate a fair dataset $\{\hat{\mathcal{X}}, \hat{\mathcal{Y}}, \hat{\mathcal{S}}\} \sim P_{G}$ which achieves the \textit{statistical parity} w.r.t the protected attribute $\hat{\mathcal{S}}$, i.e., $P(\hat{y}=1|\hat{s}=1) = P(\hat{y}=1|\hat{s}=0)$. Meanwhile, our goal is to ensure that given a generated dataset $\{\hat{\mathcal{X}}, \hat{\mathcal{Y}}\}$ as training samples, a classification model seeks an accurate function $\eta: \hat{\mathcal{X}} \rightarrow \hat{\mathcal{Y}}$ while satisfying fair classification with respect to the protected attribute on the real dataset, i.e., $P(\eta(\mathbf{x})=1|s=1)=P(\eta(\mathbf{x}=1)|s=0)$. 

\subsection{Model}

FairGAN consists of one generator $G_{Dec}$ and two discriminators $D_1$ and $D_2$. We adopt the revised generator from medGAN \cite{Choi2017Generating} to generate both discrete and continuous data. Figure \ref{fig:fairgan} shows the structure of FairGAN. 
In FairGAN, every generated sample has a corresponding value of the protected attribute $s \sim P_{\text{data}}(s)$. The generator $G_{Dec}$ generates a fake pair $(\hat{\mathbf{x}},\hat{y})$ following the conditional distribution $P_{G}(\mathbf{x},y|s)$. The fake pair $(\hat{\mathbf{x}},\hat{y})$ is generated by a noise variable $\mathbf{z}$ given the protected attribute $s$, namely, 
\begin{equation}
\label{eq:generator}
\hat{\mathbf{x}},\hat{y} = G_{Dec}(\mathbf{z},s), \mathbf{z} \sim P_{\mathbf{z}}(\mathbf{z}), 
\end{equation}
where $P_{\mathbf{z}}(\mathbf{z})$ is a prior distribution. Hence, the generated fake sample $(\hat{\mathbf{x}},\hat{y},\hat{s})$ is from the joint distribution $P_{G}(\mathbf{x},y,s)=P_{G}(\mathbf{x},y|s)P_{G}(s)$, where $P_G(s)=P_{\text{data}}(s)$. 
The discriminator $D_1$ is trained to distinguish between the real data from $P_{\text{data}}(\mathbf{x}, y, s)$ and the generated fake data from $P_G(\mathbf{x},y,s)$.


Meanwhile, in order to make the generated dataset achieve fairness, a constraint is applied to the generated samples, which aims to keep $P_{G}(\mathbf{x},y|s=1) = P_{G}(\mathbf{x},y|s=0)$. Therefore, another discriminator $D_2$ is incorporated into the FairGAN model and trained to distinguish the two categories of generated samples, $P_{G}(\mathbf{x},y|s=1)$ and $P_{G}(\mathbf{x},y|s=0)$. 

The value function of the minimax game is described as:
\begin{equation}
\label{eq:fairgan}
		 \min_{G_{Dec}} \max_{D_1,D_2}  V(G_{Dec}, D_1, D_2) =  V_1(G_{Dec}, D_1) + \lambda V_2(G_{Dec}, D_2),  
\end{equation}
where 
\begin{equation}
\begin{split}
\label{eq:gan}
	& ~V_1  (G_{Dec},  D_1) \\ & = \mathbb{E}_{s \sim P_{\text{data}}(s), (\mathbf{x},y) \sim P_{\text{data}}(\mathbf{x},y|s)} {[\log D_1(\mathbf{x}, y, s)]} \\
		 & + \mathbb{E}_{\hat{s} \sim P_{G}(s), (\hat{\mathbf{x}},\hat{y}) \sim P_{G}(\mathbf{x},y|s)} {[\log (1 - D_1(\hat{\mathbf{x}},\hat{y}, \hat{s}))]},
\end{split}
\end{equation}


\begin{equation}
\begin{split}
\label{eq:fair}
	V_2(G_{Dec}, D_2) & = \mathbb{E}_{(\hat{\mathbf{x}},\hat{y}) \sim P_G (\mathbf{x},y|s=1)}[\log D_2(\hat{\mathbf{x}},\hat{y})] \\
		 & + \mathbb{E}_{(\hat{\mathbf{x}},\hat{y}) \sim P_G (\mathbf{x},y|s=0)}[\log (1-D_2(\hat{\mathbf{x}},\hat{y}))],
\end{split}
\end{equation}
$\lambda$ is a hyperparameter that specifies a trade off between utility and fairness of data generation.


The first value function $V_1$ is similar to a conditional GAN model \cite{Mirza2014Conditional}, where the generator $G$ seeks to learn the joint distribution $P_G(\mathbf{x},y,s)$ over real data $P_{\text{data}}(\mathbf{x},y,s)$ by first drawing $\hat{s}$ from $P_{G}(s)$ and then drawing $\{\hat{\mathbf{x}},\hat{y}\}$ from $P_G(\mathbf{x},y|s)$ given a noise variable. Note that in the generated sample $\{\hat{\mathbf{x}},\hat{y}, \hat{s}\}$, the protected attribute $\hat{s}=s$ due to the generator conditioning on $s$ to generate $\{\hat{\mathbf{x}},\hat{y}\}$. The second value function $V_2$ aims to make the generated samples not encode any information supporting to predict the value of protected attribute $s$. Therefore, $D_2$ is trained to correctly predict $s$ given a generated sample while the generator $G$ aims to fool the discriminator $D_2$. Once the generated sample $\{\hat{\mathbf{x}},\hat{y}\}$ cannot be used to predict the protected attribute $\hat{s}$ ($s$), the correlation between $\{\hat{\mathbf{x}},\hat{y}\}$ and $s$ is removed, i.e., $\{\hat{\mathbf{x}},\hat{y}\} \independent s$. FairGAN can ensure that the generated samples do not have the disparate impact. 

For the decoder $Dec$ to convert the representations to data samples, FairGAN first pre-trains the autoencoder model. The decoder then can generate samples given the representation from $G(\mathbf{z},s)$. Meanwhile, since the autoencoder is pre-trained by the original dataset that may contain discrimination information, we further fine-tune the decoder $Dec$ to remove the discrimination information when optimizing the generator $G$. 
The procedure of training FairGAN is shown in Algorithm \ref{algr:train}, where $\theta_{ae}$, $\theta_{d_1}$, $\theta_{d_2}$ and $\theta_{g}$ are trainable parameters in autoencoder, $D_1$, $D_2$ and $G_{Dec}$ respectively. FairGAN first pretrains the autoencoder (from Line \ref{algr:start_pre} to \ref{algr:end_pre}). For training the generator $G_{Dec}$ and discriminators $D_1$ and $D_2$, FairGAN first samples a batch of real data and a batch of fake data to train $G_{Dec}$ and $D_1$ (from Line \ref{algr:start_d1} to \ref{algr:end_d1}) and then applies the fair constraint to train $G_{Dec}$ and $D_2$ (from Line \ref{algr:start_d2} to \ref{algr:end_d2}).

\begin{algorithm}[htb]
\begin{algorithmic}[1]
	\For{number of pre-training iterations}
	\label{algr:start_pre}
		\State Sample a batch of $m$ examples $(\mathbf{x}, y, s) \sim P_{\text{data}}(\mathbf{x}, y, s)$
		\State Update Autoencoder by descending its stochastic gradient:
		\[\nabla_{\theta_{ae}} \frac{1}{m}\sum_{i=1}^m ||\mathbf{x}' - \mathbf{x}||_2^2\]
	\EndFor
	\label{algr:end_pre}

	\For{number of training iterations}
		\State Sample a batch of $m$ examples $(\mathbf{x}, y, s) \sim P_{\text{data}}(\mathbf{x}, y, s)$
		\label{algr:start_d1}
		\State Sample a batch of $m$ examples $(\hat{\mathbf{x}}, \hat{y}, \hat{s}) \sim P_{G}(\mathbf{x}, y, s)$ from generator $G_{Dec}(\mathbf{z}, s)$ by first drawing $s \sim P_{G}(s)$ and noise samples $\mathbf{z} \sim P_{\mathbf{z}}(\mathbf{z})$

		\State Update $D_1$ by ascending its stochastic gradient:
		\[\nabla_{\theta_{d_1}} \frac{1}{m}\sum_{i=1}^m \big[\log D_1(\mathbf{x}, y, s) + \log(1-D_1(\hat{\mathbf{x}}, \hat{y}, \hat{s}))\big]\]

		\State Update $G_{Dec}$ by descending along its stochastic gradient:
		\[\nabla_{\theta_{g}} \frac{1}{m}\sum_{i=1}^m \log(1-D_1(\hat{\mathbf{x}}, \hat{y}, \hat{s}))\]
		\label{algr:end_d1}
		\State Sample a batch of $m$ examples $(\hat{\mathbf{x}}, \hat{y} | \hat{s}=1) \sim P_{G}(\mathbf{x}, y|s=1)$ and sample another batch of $m$ examples $(\hat{\mathbf{x}}, \hat{y} | \hat{s}=0) \sim P_{G}(\mathbf{x}, y|s=0)$
		\label{algr:start_d2}
		\State Update $D_2$ by ascending its stochastic gradient:
		\[\nabla_{\theta_{d_2}} \frac{1}{2m}\sum_{i=1}^{2m} \big[\log D_2(\hat{\mathbf{x}}, \hat{y}) + \log(1-D_2(\hat{\mathbf{x}}, \hat{y}))\big]\]

		\State Update $G_{Dec}$ by descending along its stochastic gradient:
		\[\nabla_{\theta_{g}} \frac{1}{2m}\sum_{i=1}^{2m} \big[\log D_2(\hat{\mathbf{x}}, \hat{y}) + \log(1-D_2(\hat{\mathbf{x}}, \hat{y}))\big]\]
		\label{algr:end_d2}
	\EndFor
\end{algorithmic}
\caption{Minibatch stochastic gradient descent training of FairGAN.}
\label{algr:train}
\end{algorithm}

\subsection{Theoretical analysis}

We conduct the similar theoretical analysis as the traditional GAN model \cite{Goodfellow2014Generative}. We first consider the optimal discriminators $D_1$ and $D_2$ for any given generator $G$.

\begin{prop}\label{prop:fairgan}
For $G$ fixed, the optimal discriminators $D_1$ and $D_2$ by the value function $V(G_{Dec}, D_1, D_2)$ are
\[
	D_1^*(\mathbf{x},y,s) = \frac{P_{\text{data}}(\mathbf{x},y,s)} {P_{\text{data}}(\mathbf{x},y,s)+P_{G}(\mathbf{x},y,s)},
\]
\[
	D_2^*(\mathbf{x},y) = \frac{P_{G}(\mathbf{x},y|s=1)}{P_{G}(\mathbf{x},y|s=1)+P_{G}(\mathbf{x},y|s=0)}
\]
\end{prop}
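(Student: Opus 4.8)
The plan is to mimic the proof of the optimal-discriminator lemma for the vanilla GAN \cite{Goodfellow2014Generative}, exploiting the fact that the combined value function $V(G_{Dec}, D_1, D_2) = V_1(G_{Dec}, D_1) + \lambda V_2(G_{Dec}, D_2)$ decouples in the discriminators: $D_1$ appears only in $V_1$ and $D_2$ appears only in $V_2$, and they are separate networks with disjoint parameter sets. Hence for $G$ fixed the joint maximization $\max_{D_1, D_2}$ splits into two independent problems, and since $\lambda > 0$ it has no effect on which $D_2$ maximizes $V_2$.

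For $D_1$, I would rewrite $V_1$ as a single integral over the joint domain of $(\mathbf{x}, y, s)$. Because the fake sample is drawn as $\hat{s} \sim P_G(s)$ then $(\hat{\mathbf{x}}, \hat{y}) \sim P_G(\mathbf{x}, y \mid s)$, its law is $P_G(\mathbf{x}, y, s) = P_G(\mathbf{x}, y \mid s) P_G(s)$, so
\[
V_1(G_{Dec}, D_1) = \int_{\mathbf{x}, y, s} \big[ P_{\text{data}}(\mathbf{x}, y, s)\log D_1(\mathbf{x}, y, s) + P_G(\mathbf{x}, y, s)\log(1 - D_1(\mathbf{x}, y, s)) \big].
\]
Then I would invoke the elementary fact that for nonnegative constants $a, b$ not both zero the function $t \mapsto a\log t + b\log(1 - t)$ on $[0,1]$ attains its maximum at $t = a/(a+b)$ (set the derivative to zero, check concavity). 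Applying this pointwise with $a = P_{\text{data}}(\mathbf{x}, y, s)$ and $b = P_G(\mathbf{x}, y, s)$ gives the claimed $D_1^*$. The argument for $D_2$ is identical in form: express $V_2$ as $\int_{\mathbf{x}, y} \big[ P_G(\mathbf{x}, y \mid s{=}1)\log D_2(\mathbf{x}, y) + P_G(\mathbf{x}, y \mid s{=}0)\log(1 - D_2(\mathbf{x}, y)) \big]$ and maximize pointwise with $a = P_G(\mathbf{x}, y \mid s{=}1)$, $b = P_G(\mathbf{x}, y \mid s{=}0)$, yielding the stated $D_2^*$.

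There is no genuine obstacle here — the statement is essentially the standard GAN lemma applied twice — but two points warrant a line of care. First, one should note that optimizing inside the integral is legitimate: since $D_1$ and $D_2$ range over all measurable functions into $[0,1]$, the pointwise maximizer is itself an admissible discriminator, so the supremum over functions coincides with the integral of the pointwise suprema; the optimal value is only pinned down on the union of the supports of the two densities involved (off that set $D_1^*$, $D_2^*$ may be chosen arbitrarily). Second, I would state explicitly the decoupling observation above, since it is what licenses treating $\max_{D_1, D_2}$ as two separate scalar optimizations and discarding the factor $\lambda$.
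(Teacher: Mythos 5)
Your proposal is correct and follows essentially the same route as the paper: write the value function as an integral over the sample space and apply, pointwise, the standard fact that $t \mapsto a\log t + b\log(1-t)$ is maximized on $[0,1]$ at $a/(a+b)$, once with $(a,b)=(P_{\text{data}}(\mathbf{x},y,s),P_{G}(\mathbf{x},y,s))$ and once with $(a,b)=(P_{G}(\mathbf{x},y|s{=}1),P_{G}(\mathbf{x},y|s{=}0))$. Your explicit remarks on the decoupling of $D_1$ and $D_2$, the irrelevance of $\lambda>0$ to the maximizer of $V_2$, and the legitimacy of pointwise optimization are left implicit in the paper but are consistent with its argument.
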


\begin{proof}
The training criteria for the discriminators $D_1$ and $D_2$, given any generator $G_{Dec}$, are to maximize  $V(G_{Dec}, D_1, D_2)$:\\
\resizebox{.9\linewidth}{!}{
\begin{minipage}{\linewidth}
\begin{equation}
\begin{split}
	&~V(G_{Dec}, D_1, D_2)  \\
		 & = \int_x \int_y   P_{\text{data}}(\mathbf{x},y,s) \log D_1(\mathbf{x},y,s) 
		  + \int_x \int_y P_{G}(\mathbf{x},y,s) \log (1-D_1(\mathbf{x},y,s)) \\
		 & + \lambda \int_x \int_y  P_{G}(\mathbf{x},y|s=1) \log (D_2(\mathbf{x},y)) 
		 + \lambda \int_x \int_y  P_{G}(\mathbf{x},y|s=0) \log (1 - D_2(\mathbf{x},y))
\end{split}
\end{equation}
\end{minipage}
}

Following \cite{Goodfellow2014Generative}, for any $(a,b) \in \mathbb{R}^2 \setminus \{0,0\}$, the function $y \rightarrow a\log(y)+b\log(1-y)$ achieves its maximum in $[0,1]$ at $\frac{a}{a+b}$. This concludes the proof.
\end{proof}

\begin{theorem}
The minimax game of FairGAN can be reformulated as $C(G_{Dec})=-(2+\lambda) \log4 + 2 \cdot JSD(P_{\text{data}}(\mathbf{x},y,s)||P_{G}(\mathbf{x},y,s))+2 \lambda \cdot JSD(P_{G}(\mathbf{x},y|s=1)||P_{G}(\mathbf{x},y|s=0))$, where $JSD$ indicates the Jensen-Shannon divergence between two distributions. The minimum value is $-(2+\lambda) \log4 + \Delta$, where $\Delta$ is the minimum value when the two JSD terms are converged to a global optimal point due to convexity of JSD.
\end{theorem}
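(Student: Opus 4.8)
The plan is to mirror the convergence analysis of the original GAN \cite{Goodfellow2014Generative}. First I would substitute the optimal discriminators $D_1^*$ and $D_2^*$ of Proposition \ref{prop:fairgan} into the value function to obtain the virtual training criterion
\[
C(G_{Dec}) \;=\; \max_{D_1,D_2} V(G_{Dec},D_1,D_2) \;=\; V(G_{Dec},D_1^*,D_2^*).
\]
After the substitution the $V_1$ part becomes $\mathbb{E}_{(\mathbf{x},y,s)\sim P_{\text{data}}}\big[\log\frac{P_{\text{data}}(\mathbf{x},y,s)}{P_{\text{data}}(\mathbf{x},y,s)+P_G(\mathbf{x},y,s)}\big] + \mathbb{E}_{(\mathbf{x},y,s)\sim P_G}\big[\log\frac{P_G(\mathbf{x},y,s)}{P_{\text{data}}(\mathbf{x},y,s)+P_G(\mathbf{x},y,s)}\big]$, and the $\lambda V_2$ part becomes the analogous $\lambda$-weighted pair of terms with $P_G(\mathbf{x},y|s=1)$ and $P_G(\mathbf{x},y|s=0)$ playing the roles of the two densities.

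Next I would apply to each of the two pairs the standard algebraic manipulation: add and subtract $\log 2$ inside each expectation, so that each integrand takes the form $-\log 2 + \log\frac{P}{(P+Q)/2}$, recognize the two resulting expectations as $-\log 2 + KL\!\big(P \,\|\, \tfrac{P+Q}{2}\big)$ and $-\log 2 + KL\!\big(Q \,\|\, \tfrac{P+Q}{2}\big)$, and combine them via the identity $KL(P\|\tfrac{P+Q}{2}) + KL(Q\|\tfrac{P+Q}{2}) = 2\,JSD(P\|Q)$. This rewrites each pair as $-\log 4 + 2\,JSD(P\|Q)$. Taking $(P,Q)=(P_{\text{data}}(\mathbf{x},y,s),P_G(\mathbf{x},y,s))$ for the first pair and $(P,Q)=(P_G(\mathbf{x},y|s=1),P_G(\mathbf{x},y|s=0))$ for the second, and collecting constants, yields
\[
C(G_{Dec}) = -(2+\lambda)\log 4 + 2\,JSD\big(P_{\text{data}}(\mathbf{x},y,s)\,\|\,P_G(\mathbf{x},y,s)\big) + 2\lambda\,JSD\big(P_G(\mathbf{x},y|s=1)\,\|\,P_G(\mathbf{x},y|s=0)\big),
\]
which is the claimed reformulation.

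For the statement about the minimum, I would note that both JSD terms are non-negative, so $C(G_{Dec}) \ge -(2+\lambda)\log 4$, but -- unlike plain GAN -- the two terms cannot in general be driven to zero simultaneously: the first vanishes only at $P_G = P_{\text{data}}$, which typically still carries disparate impact and hence leaves the second term positive. To justify that an optimum exists I would view $C$ as a functional of the generated joint density $P_G(\mathbf{x},y,s)$; since $P_G(s)=P_{\text{data}}(s)$ is fixed by construction, $P_G(\mathbf{x},y|s=i)=P_G(\mathbf{x},y,s=i)/P_{\text{data}}(s=i)$ is a linear function of $P_G$, and because JSD is jointly convex in its two arguments, both JSD terms -- and therefore their $\lambda$-weighted sum -- are convex in $P_G$. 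Hence the sum attains a global minimum value $\Delta$, and the minimum of the game equals $-(2+\lambda)\log 4 + \Delta$.

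I expect the routine substitution and logarithm bookkeeping to be straightforward; the only genuinely new point relative to \cite{Goodfellow2014Generative} is that the two objectives compete, so the clean conclusion ``$P_G = P_{\text{data}}$ is optimal'' no longer holds. The main effort will be to make the convexity argument precise enough to guarantee that $\Delta$ is well defined and to characterize how the trade-off between data utility and fairness is governed by $\lambda$.
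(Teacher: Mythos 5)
Your proposal is correct and follows essentially the same route as the paper: substitute the optimal discriminators $D_1^*$ and $D_2^*$ from Proposition \ref{prop:fairgan}, rewrite each pair of expectations as $-\log 4$ plus twice a Jensen--Shannon divergence, and then observe that the two JSD terms compete and only a $\lambda$-dependent global optimum $\Delta$ is reached by convexity. If anything, you make the two steps the paper leaves implicit (the $\log 2$ bookkeeping and why convexity of the JSD sum in $P_G$ makes $\Delta$ well defined, using that $P_G(s)=P_{\text{data}}(s)$ is fixed) more explicit than the original proof.
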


\begin{proof}
Given $D_1^*$ and $D_2^*$, we reformulate Equation \ref{eq:fairgan} as:

\resizebox{.98\linewidth}{!}{
\begin{minipage}{\linewidth}
\begin{equation}
\begin{split}
\label{eq:maxd}
   	&~~C(G_{Dec}) \\
	& = \max_{D_1,D_2} V(G_{Dec}, D_1, D_2) \\
   & = \mathbb{E}_{(\mathbf{x},y,s) \sim P_{\text{data}}(\mathbf{x},y,s)}[\log \frac{P_{\text{data}}(\mathbf{x},y,s)} {P_{\text{data}}(\mathbf{x},y,s)+P_{G}(\mathbf{x},y,s)}] \\
   & + \mathbb{E}_{(\mathbf{x},y,s) \sim P_{G}(\mathbf{x},y,s)}[\log \frac{P_{G}(\mathbf{x},y,s)} {P_{\text{data}}(\mathbf{x},y,s)+P_{G}(\mathbf{x},y,s)}] \\
   & + \lambda \mathbb{E}_{(\mathbf{x},y) \sim P_{G}(\mathbf{x},y|s=1)} [\log \frac{P_{G}(\mathbf{x},y|s=1)} {P_{G}(\mathbf{x},y|s=1)+P_{G}(\mathbf{x},y|s=0)}] \\
   & + \lambda \mathbb{E}_{(\mathbf{x},y) \sim P_{G}(\mathbf{x},y|s=0)} [\log \frac{P_{G}(\mathbf{x},y|s=0)} {P_{G}(\mathbf{x},y|s=1)+P_{G}(\mathbf{x},y|s=0)}] \\
   & =  -(2+\lambda) \log4 + 2 \cdot JSD(P_{\text{data}}(\mathbf{x},y,s)||P_{G}(\mathbf{x},y,s)) \\
   & + 2 \lambda \cdot JSD(P_{G}(\mathbf{x},y|s=1)||P_{G}(\mathbf{x},y|s=0)),
\end{split}
\end{equation}
\end{minipage}
}
While the two pairs of distributions $P_{\text{data}}(\mathbf{x},y,s)=P_{G}(\hat{\mathbf{x}},\hat{y},s)$ and $P_{G}(\hat{\mathbf{x}},\hat{y}|s=1)=P_{G}(\hat{\mathbf{x}},\hat{y}|s=0)$ cannot be achieved simultaneously in Equation \ref{eq:maxd}, the two JSD terms can still be converged to a global optimal point due to convexity of JSD. Hence, the optimum value of $V(G_{Dec}, D_1, D_2) = -(2+\lambda) \log4 + \Delta$, where $\Delta$ is the minimum value when the two JSD terms are converged to a point. 

\end{proof}

\subsection{Na\"iveFairGAN models}
In this subsection, we discuss two naive approaches which can only achieve fair data generation (disparate treatment) but cannot achieve fair classification (disparate impact). 

{\bf \noindent Na\"iveFairGAN-I}

To mitigate the disparate treatment, a straightforward approach is to remove $\mathcal{S}$ from the dataset. Hence, if a GAN model ensures the generated samples have the same distribution as the real data with unprotected attributes and decision, i.e., $P_G(\mathbf{x},y)=P_{\text{data}}(\mathbf{x},y)$, and randomly assigns the values of protected attribute with only preserving the ratio of protected group to unprotected group the same as the real data, the completely generated dataset could achieve the statistical parity in the dataset. Because there is no additional fair constraint in data generation, the Na\"iveFairGAN-I model is a regular GAN model which consists of one generator and one discriminator. The value function of Na\"iveFairGAN-I is defined as:

\begin{equation*}
\begin{split}
	\min_{G_{Dec}} \max_{D}  V(G_{Dec}, D)  &= \mathbb{E}_{\mathbf{x},y \sim P_{\text{data}}(\mathbf{x},y)}{[\log D_1(\mathbf{x}, y)]} \\
	& + \mathbb{E}_{\hat{\mathbf{x}},\hat{y} \sim P_{G}(\mathbf{x},y)}{[\log (1 - D_1(\hat{\mathbf{x}},\hat{y}))]},
\end{split}
\end{equation*}

In principle, Na\"iveFairGAN-I achieves the fair data generation by \textit{randomly generating} the protected attribute $\hat{\mathcal{S}}$. However, due to the property $P_G(\mathbf{x}) = P_{\text{data}}(\mathbf{x})$, the disparate impact caused by the correlation between generated unprotected attributes $\hat{\mathcal{X}}$ and the real protected attribute $\mathcal{S}$ is not removed. The classifier trained on the generated dataset cannot achieve fair prediction when tested on real data. 

{\bf \noindent Na\"iveFairGAN-II}

We can extend the Na\"iveFairGAN-I to Na\"iveFairGAN-II by considering the protected attribute. Na\"iveFairGAN-II consists of one generator and two discriminators. Given the protected attribute $\mathcal{S}$, the real data can be seen as from two sub-domains, $P_{\text{data}}(\mathbf{x},y|s=1)$ and $P_{\text{data}}(\mathbf{x},y|s=0)$. The discriminator $D_1$ is used to detect whether a pair of sample $(\mathbf{x},y)$ is from the real data. If the sample is from the generator, then another discriminator $D_2$ is used to distinguish whether this sample is from $P_G (\mathbf{x},y|s=0)$ or $P_G (\mathbf{x},y|s=1)$. Hence, by playing this three-player game with value function shown in Equation \ref{eq:nafairgan}, the first discriminator is to ensure the generated sample be close to the real sample while the second discriminator is to ensure the demographic parity of the generated data. 

\begin{equation}
\label{eq:nafairgan}
	\min_{G_{Dec}} \max_{D_1,D_2}  V(G_{Dec}, D_1, D_2) =  V_1(G_{Dec}, D_1) + V_2(G_{Dec}, D_2)
\end{equation}
where
\begin{equation*}
\begin{split}
	 V_1 (G_{Dec}, D_1) &=  \mathbb{E}_{(\mathbf{x},y) \sim P_{\text{data}}(\mathbf{x},y|s=1)} {[\log D_1(\mathbf{x}, y)]} \\ 
	&+ \mathbb{E}_{(\mathbf{x},y) \sim P_{\text{data}}(\mathbf{x},y|s=0)} {[\log D_1(\mathbf{x}, y)]} \\
		 & + \mathbb{E}_{(\hat{\mathbf{x}},\hat{y}) \sim P_{G}(\mathbf{x},y|s=1)} {[\log (1 - D_1(\hat{\mathbf{x}},\hat{y}))]} \\
		 & + \mathbb{E}_{(\hat{\mathbf{x}},\hat{y}) \sim P_{G}(\mathbf{x},y|s=0)} {[\log (1 - D_1(\hat{\mathbf{x}},\hat{y}))]} ,
\end{split}
\end{equation*}

\begin{equation*}
\begin{split}
	V_2(G_{Dec}, D_2) & = \mathbb{E}_{(\hat{\mathbf{x}},\hat{y}) \sim P_G (\mathbf{x},y|s=1)}[\log D_2(\hat{\mathbf{x}},\hat{y})] \\
			 & + \mathbb{E}_{(\hat{\mathbf{x}},\hat{y}) \sim P_G (\mathbf{x},y|s=0)}[\log (1-D_2(\hat{\mathbf{x}},\hat{y}))],
\end{split}
\end{equation*}

Following the similar theoretical analysis as FairGAN, the global minimum of Na\"iveFairGAN-II $V(G_{Dec}, D_1, D_2)$ is achieved if and only if $P_{G}(\mathbf{x},y|s=1) = P_{G}(\mathbf{x},y|s=0) = \big(P_{\text{data}}(\mathbf{x},y|s=1)+P_{\text{data}}(\mathbf{x},y|s=0)\big)/2$ with $D_1^*=\frac{1}{2}$ and $D_2^*=\frac{1}{2}$. At that point, $V(G_{Dec}, D_1, D_2)$ achieves the value $-4 \log4$. The detailed proof is described in the appendix. 

Although Na\"iveFairGAN-II considers the protected attribute, it achieves $P_{G}(\mathbf{x},y|s=1) = P_{G}(\mathbf{x},y|s=0) = \big(P_{\text{data}}(\mathbf{x},y|s=1)+P_{\text{data}}(\mathbf{x},y|s=0)\big)/2$ without ensuring $P_{G}(\mathbf{x},y|s=1) = P_{\text{data}}(\mathbf{x},y|s=1)$ or $P_{G}(\mathbf{x},y|s=0) = P_{\text{data}}(\mathbf{x},y|s=0)$. Hence, Na\"iveFairGAN-II achieves the fair data generation by \textit{randomly shuffling} the protected attribute $\hat{\mathcal{S}}$. Similar to the data from Na\"iveFairGAN-I, the generated data from Na\"iveFairGAN-II also incurs disparate impact because the correlation between generated unprotected attributes $\hat{\mathcal{X}}$ and the real protected attribute $\mathcal{S}$ is not removed.

{\bf \noindent FairGAN vs. Na\"iveFairGAN-I vs. Na\"iveFairGAN-II}

\begin{figure*}[h]
	\centering 
	\begin{subfigure}{0.23\textwidth}
		\includegraphics[width=\linewidth]{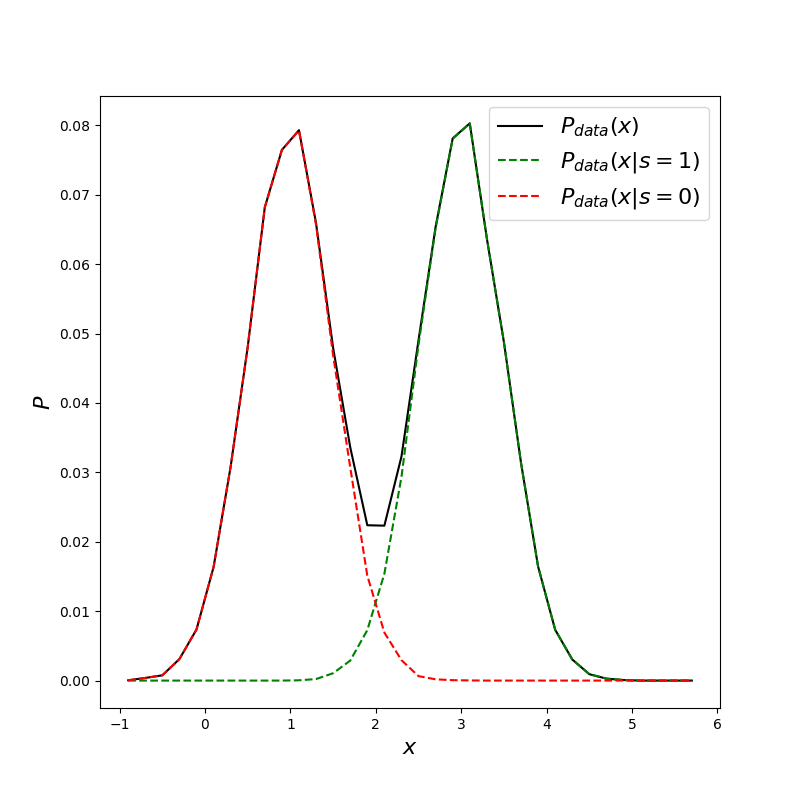}
		\caption{Toy Dataset}
		\label{fig:toy}
	\end{subfigure}\hfil 
	\begin{subfigure}{0.23\textwidth}
		\includegraphics[width=\linewidth]{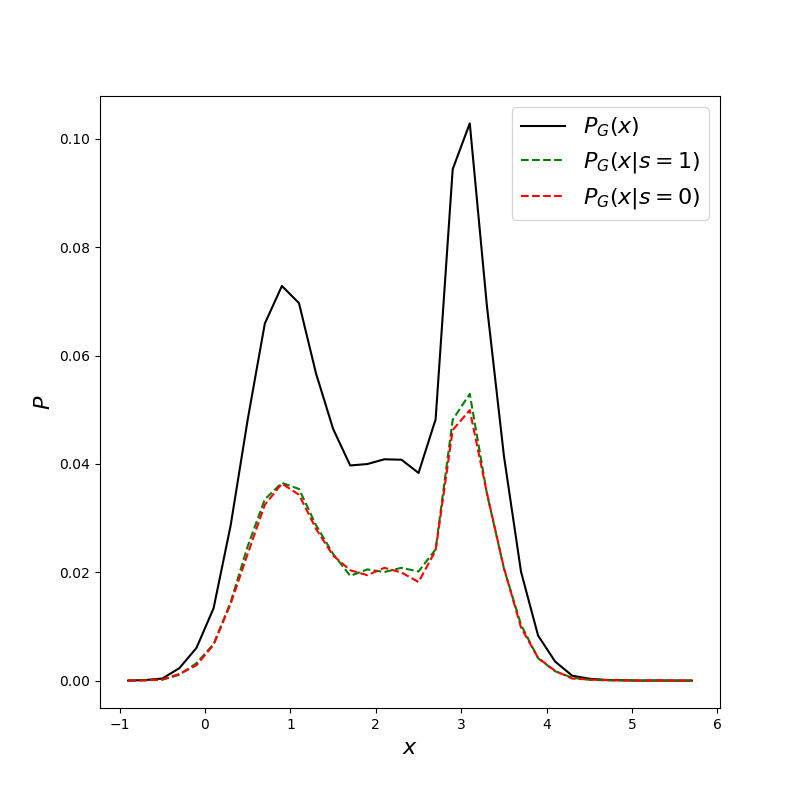}
		\caption{Na\"iveFairGAN-I}
		\label{fig:toy_nfair1}
	\end{subfigure}\hfil 
	\begin{subfigure}{0.23\textwidth}
		\includegraphics[width=\linewidth]{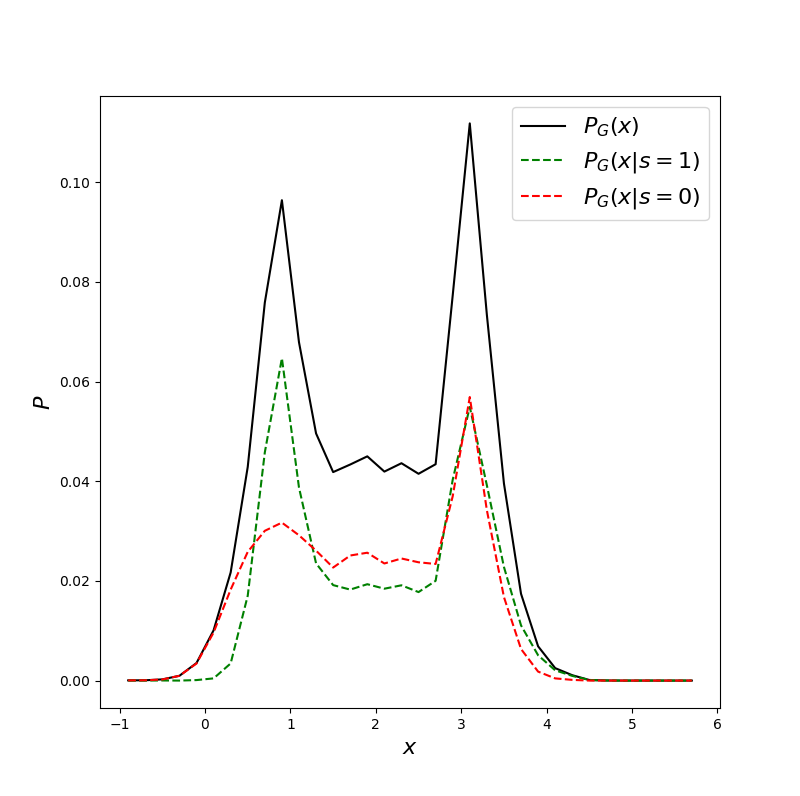}
		\caption{Na\"iveFairGAN-II}
		\label{fig:toy_nfair2}
	\end{subfigure}
	\begin{subfigure}{0.23\textwidth}
		\includegraphics[width=\linewidth]{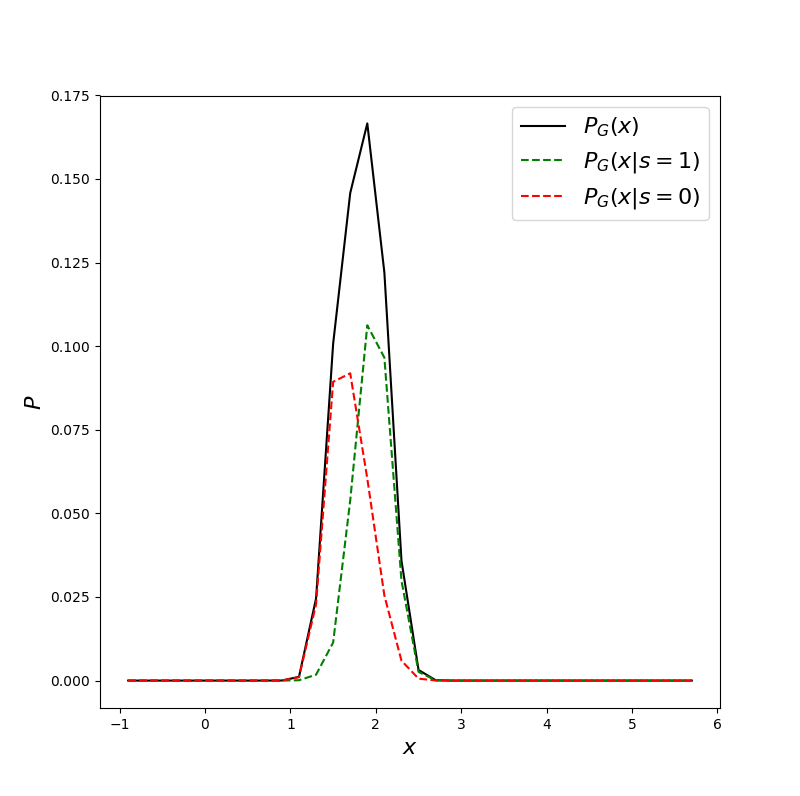}
		\caption{FairGAN}
		\label{fig:toy_fair}
	\end{subfigure}
	\caption{Comparing FairGAN, Na\"iveFairGAN-I and Na\"iveFairGAN-II on a toy dataset. (a) shows the distributions $P_{\text{data}}(x)$ (black), $P_{\text{data}}(x|s=1)$ (green) and $P_{\text{data}}(x|s=0)$ (red) of real data; (b), (c) and (d) are distributions $P_G(x)$, $P_G(x|s=1)$ and $P_G(x|s=0)$ of synthetic datasets generated by Na\"iveFairGAN-I, Na\"iveFairGAN-II and FairGAN separately.}
	\label{fig:toy_all}
\end{figure*}

We compare FairGAN with Na\"iveFairGAN-I and Na\"iveFairGAN-II on a toy dataset which consists of one unprotected attribute $x \in \mathbb{R}$ and one protected attribute $s \in \{0,1\}$. The toy dataset is drawn from $x \sim 0.5*\mathcal{N}(1,0.5)+0.5*\mathcal{N}(3,0.5)$, where $P_{\text{data}}(x|s=1)=\mathcal{N}(1,0.5)$ and $P_{\text{data}}(x|s=0)=\mathcal{N}(3,0.5)$. Hence, the unprotected attribute $x$ is strong correlated with the protected attribute $s$. 

We train FairGAN and Na\"iveFairGAN models to approximate the distribution of $P_{\text{data}}(x)$. Figure \ref{fig:toy_all} shows the data probability $P(x)$ and two conditional probabilities $P(x|s=1)$ and $P(x|s=0)$ of the toy dataset (shown in Figure \ref{fig:toy}) and synthetic datasets (Figures \ref{fig:toy_nfair1} to \ref{fig:toy_fair}) from FairGAN and Na\"iveFairGAN models.

For Na\"iveFairGAN-I, it is a regular GAN model which aims to make $P_G(x)=P_{\text{data}}(x)$ while $s$ is independently generated. Therefore, in this toy example, as shown in Figure \ref{fig:toy_nfair1}, we can observe that $P_G(x)$ is similar to $P_{\text{data}}(x)$. Meanwhile, because $s$ is independently assigned instead of generated from the GAN model,  $P_G(x|s=1)$ and $P_G(x|s=0)$ are almost identical to each other, which avoids disparate treatment. However, due to the high similarity between $P_G(x)$ and $P_{\text{data}}(x)$, given the generated $\hat{x}$ and the real  $s$, the potential disparate impact isn't mitigated. 

For Na\"iveFairGAN-II, in the ideal situation, it has $P_G(x|s=1)=P_G(x|s=0)= \big(P_{\text{data}}(x|s=1)+P_{\text{data}}(x|s=0)\big)/2$. As shown in Figure \ref{fig:toy_nfair2}, the empirical results are close to the theoretical analysis. Hence, the generated dataset from Na\"iveFairGAN-II achieves statistical parity. However, similar to the dataset from Na\"iveFairGAN-I, the dataset from Na\"iveFairGAN-II also has $P_G(x)=P_{\text{data}}(x)$, so the disparate impact in terms of the real $s$ isn't mitigated. 

For FairGAN, in this toy example, the discriminator $D_1$ in FairGAN aims to ensure $P_G(x, s)=P_{\text{data}}(x, s)$. For $s=1$, we have $P_G(s=1)P_G(x|s=1)=P_{\text{data}}(s=1)P_{\text{data}}(x|s=1)$. Since $P_G(s=1)=P_{\text{data}}(s=1)$, FairGAN ensures $P_G(x|s=1)=P_{\text{data}}(x|s=1)$. Similarly, FairGAN also ensures $P_G(x|s=0)=P_{\text{data}}(x|s=0)$. Hence,  $D_1$ in FairGAN makes the two conditional distributions of synthetic data are close to those of the real data. Meanwhile, the discriminator $D_2$ in FairGAN aims to make $P_G(x|s=1)=P_{G}(x|s=0)$. The statistical parity of synthetic data is achieved by making two conditional distributions close to each other. Figure \ref{fig:toy_fair} shows that $P_G(x|s=1)$ and $P_G(x|s=0)$ are close to Gaussian distributions, especially compared with the data distributions from two Na\"iveFairGAN models. In addition, the mean values of $P_G(x|s=1)$ and $P_G(x|s=0)$ are similar, which ensures the real $s$ is unpredictable given the generated  $\hat{x}$, i.e., free from disparate impact.

For fair data generation, the statistical parity in datasets can be achieved by adopting FairGAN and Na\"iveFairGAN models. For fair classification on the real data, we should remove both disparate treatment and disparate impact. The disparate treatment can be removed by shuffling or removing the protected attribute while the disparate impact can be removed only when correlations between unprotected attributes and the protected attribute are removed by modifying the unprotected attributes \cite{Feldman2015Certifying}. Especially, in data generation scenario, the generated $\hat{\mathcal{X}}$ should be unable to predict the real $\mathcal{S}$. For Na\"iveFairGAN-I and Na\"iveFairGAN-II, $P_G(x)$ are close to $P_{\text{data}}(x)$. The potential disparate impact  is still preserved since $\hat{\mathcal{X}}$ have correlations with the real $\mathcal{S}$. A classifier trained on the synthetic dataset from Na\"iveFairGAN-I or Na\"iveFairGAN-II would capture the disparate impact, so the statistical parity in the classifier cannot be achieved when predicting on the real dataset. On the contrary, FairGAN revises $P_G(x)$  to ensure $P_G(x|s=1)=P_G(x|s=0)$, and the protected attribute from FairGAN is the same as the real data, $\hat{\mathcal{S}} = \mathcal{S}$. Once the correlations between $\hat{\mathcal{X}}$ and $\hat{\mathcal{S}}$ are removed, the correlations between $\hat{\mathcal{X}}$ and $\mathcal{S}$ are also removed. A classifier trained on the synthetic dataset from FairGAN can achieve statistical parity when tested on the real dataset.

\begin{figure*}[h!]
	\centering 
	\begin{subfigure}{0.19\textwidth}
		\includegraphics[width=\linewidth]{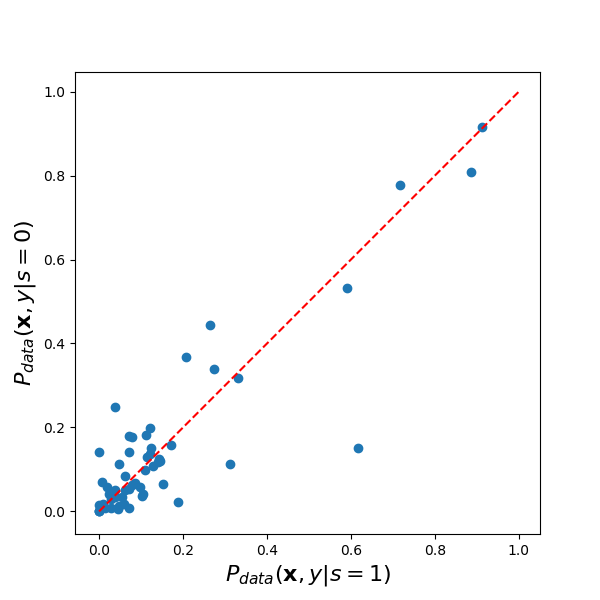}
		\caption{Real Dataset}
		\label{fig:f0}
	\end{subfigure}\hfil 
	\begin{subfigure}{0.19\textwidth}
		\includegraphics[width=\linewidth]{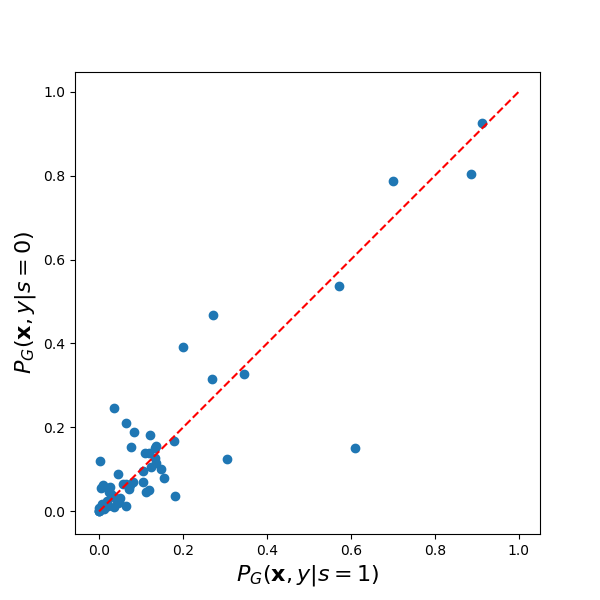}
		\caption{SYN1-GAN}
		\label{fig:f1}
	\end{subfigure}\hfil 
	\begin{subfigure}{0.19\textwidth}
		\includegraphics[width=\linewidth]{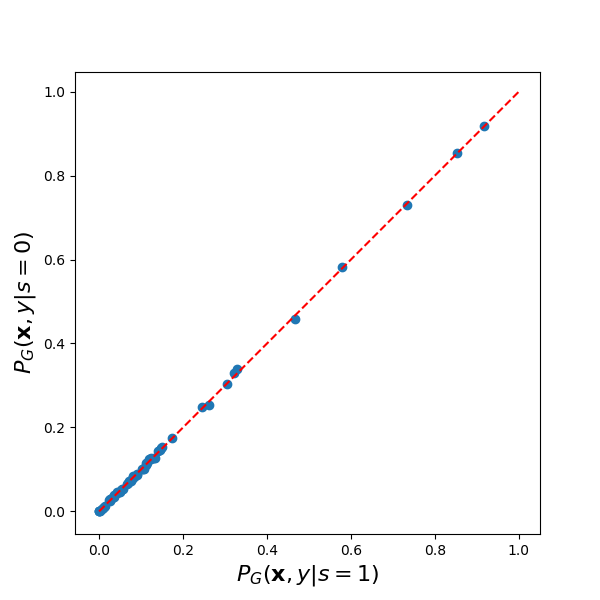}
		\caption{SYN2-NFGANI}
		\label{fig:f2}
	\end{subfigure}
	\begin{subfigure}{0.19\textwidth}
		\includegraphics[width=\linewidth]{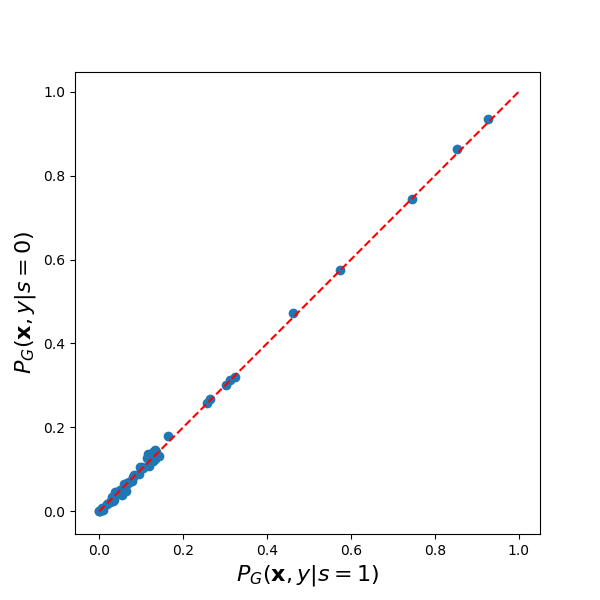}
		\caption{SYN3-NFGANII}
		\label{fig:f3}
	\end{subfigure}
	\begin{subfigure}{0.19\textwidth}
		\includegraphics[width=\linewidth]{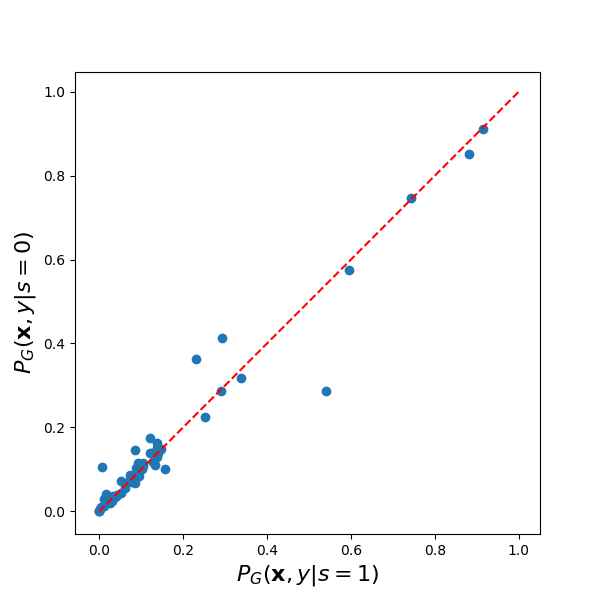}
		\caption{SYN4-FairGAN}
		\label{fig:f4}
	\end{subfigure}
	\caption{\textbf{Dimension-wise conditional probability distributions} ($P(\mathbf{x},y|s=1)$ vs. $P(\mathbf{x},y|s=0)$). Each dot represents one attribute. The x-axis represents the conditional probability given $s=1$. The y-axis represents the conditional probability given $s=0$. The diagonal line indicates the ideal fairness, where data have identical conditional probability distributions given $s$.}
	\label{fig:dwcp}
\end{figure*}

\section{Experiments}
We evaluate the performance of FairGAN on fair data generation and fair classification. 

\subsection{Experimental Setup}

{\bf \noindent Baselines.}
To evaluate the effectiveness of FairGAN, we compare the performance of FairGAN with the regular GAN model and two Na\"iveFairGAN models.
\textbf{GAN} aims to generate the synthetic samples that have the same distribution as the real data, i.e., $P_G(\mathbf{x},y,s)=P_{\text{data}}(\mathbf{x},y,s)$. The regular GAN model cannot achieve fair data generation. We adopt GAN as a baseline to evaluate the utility of data generation.
Na\"iveFairGAN models include \textbf{Na\"iveFairGAN-I} and \textbf{Na\"iveFairGAN-II}.

In this paper, we don't compare with the pre-process methods, because the classical methods like Massaging cannot remove disparate treatment and disparate impact \cite{Kamiran2012Data}. Although the certifying framework proposed algorithms to remove disparate impact, they only work on numerical attributes \cite{Feldman2015Certifying}.

{\bf \noindent Datasets.}
We evaluate FairGAN and baselines on the UCI Adult income dataset which contains 48,842 instances \cite{Dheeru2017Uci}. The decision indicates whether the income is higher than \$50k per year, and the protected attribute is gender. Each instance in the dataset consists of 14 attributes. We convert each attribute to a one-hot vector and combine all of them to a feature vector with 57 dimensions. 

In our experiments, besides adopting the original Adult dataset, we also generate four types of synthetic data, \textbf{SYN1-GAN} that is generated by a regular GAN model, \textbf{SYN2-NFGANI} that is generated by Na\"iveFairGAN-I, \textbf{SYN3-NFGANII} that is generated by Na\"iveFairGAN-II, and \textbf{SYN4-FairGAN} that is generated by FairGAN with $\lambda=1$. For each type of synthetic data, we generate five datasets to evaluate the data fairness and classification fairness. We then report the mean and stand deviation of evaluation results. The sizes of the synthetic datasets are same as the real dataset.

{\bf \noindent Implementation Details.}
We first pretrain the autoencoder for 200 epochs. Both the encoder $Enc$ and the decoder $Dec$ have one hidden layer with the dimension size as 128. The generator $G$ is a feedforward neural network with two hidden layers, each having 128 dimensions. The discriminator $D$ is also a feedforward network with two hidden layers where the first layer has 256 dimensions and the second layer has 128 dimensions. Na\"iveFairGAN-II and FairGAN are first trained without the fair constraint for 2,000 epochs ($D_1$ and $G_{Dec}$) and then trained with the fair constraint for another 2,000 epochs ($D_1$, $D_2$ and $G_{Dec}$). The regular GAN and Na\"iveFairGAN-I are trained for 2,000 epochs. We adopt Adam \cite{Kingma2015Adam} with the learning rate as 0.001 for stochastic optimization. 


\subsection{Fair Data Generation}

We evaluate FairGAN on data generation from two perspectives, fairness and utility. Fairness is to check whether FairGAN can generate fair data, while the utility is to check whether FairGAN can learn the distribution of real data precisely.

\begin{table}[h]
	\centering
	\caption{Risk differences of real and synthetic datasets}
	\label{tbl:cjpd}
	\resizebox{\columnwidth}{!}{%
	\begin{tabular}{|l|c|c|c|c|c|}
	\hline
	& Real Data & SYN1-GAN & SYN2-NFGANI & SYN3-NFGANII & SYN4-FairGAN \\ \hline
	$disk(\mathcal{D}) $                                                                         & 0.1989    & 0.1798$\pm$0.0026                                              & 0.0025$\pm$0.0007                                                    & 0.0062$\pm$0.0037                                                    & 0.0411$\pm$0.0295                                                 \\ \hline
	\end{tabular}
	}
\end{table}

\begin{figure}[h!]
	\centering 
	\begin{subfigure}{0.15\textwidth}
		\includegraphics[width=\linewidth]{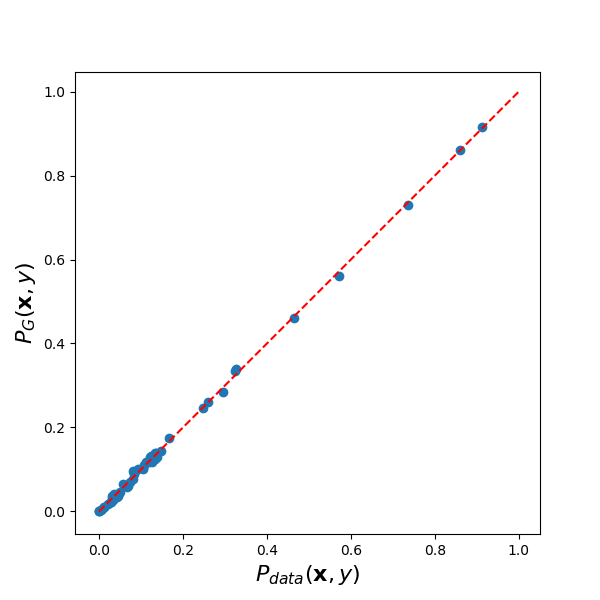}
		\caption{SYN1-GAN:\\ $P_{\text{data}}(\mathbf{x},y)$ vs. $P_G(\mathbf{x},y)$ \\~}
		\label{fig:gan_join}
	\end{subfigure}\hfil 
	\begin{subfigure}{0.15\textwidth}
		\includegraphics[width=\linewidth]{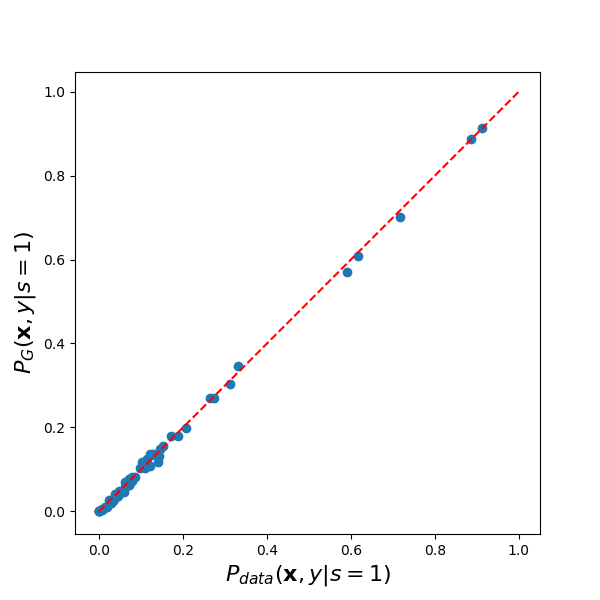}
		\caption{SYN1-GAN: \\$P_{\text{data}}(\mathbf{x},y|s=1)$ vs.\\ $P_G(\mathbf{x},y|s=1)$}
		\label{fig:gan_s1}
	\end{subfigure}\hfil 
	\begin{subfigure}{0.15\textwidth}
		\includegraphics[width=\linewidth]{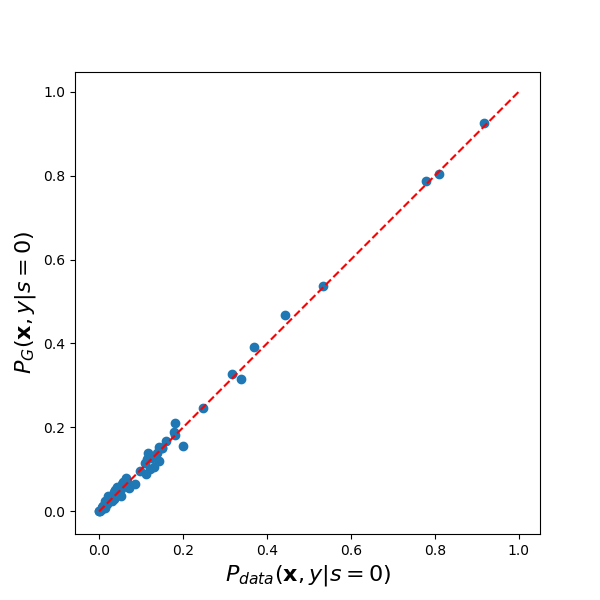}
		\caption{SYN1-GAN:\\ $P_{\text{data}}(\mathbf{x},y|s=0)$ vs.\\ $P_G(\mathbf{x},y|s=0)$}
		\label{fig:gan_s0}
	\end{subfigure}
	
	\medskip
	\begin{subfigure}{0.15\textwidth}
		\includegraphics[width=\linewidth]{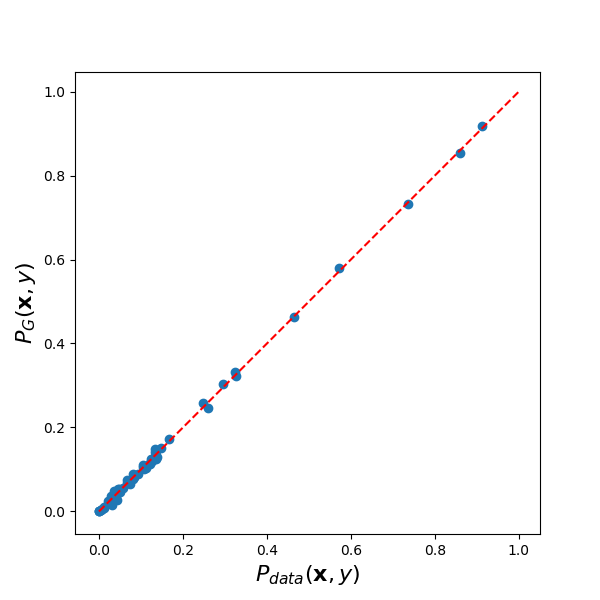}
		\caption{SYN2-NFGANI:\\ $P_{\text{data}}(\mathbf{x},y)$ vs. $P_G(\mathbf{x},y)$ \\~}
		\label{fig:ngan_join}
	\end{subfigure}\hfil 
	\begin{subfigure}{0.15\textwidth}
		\includegraphics[width=\linewidth]{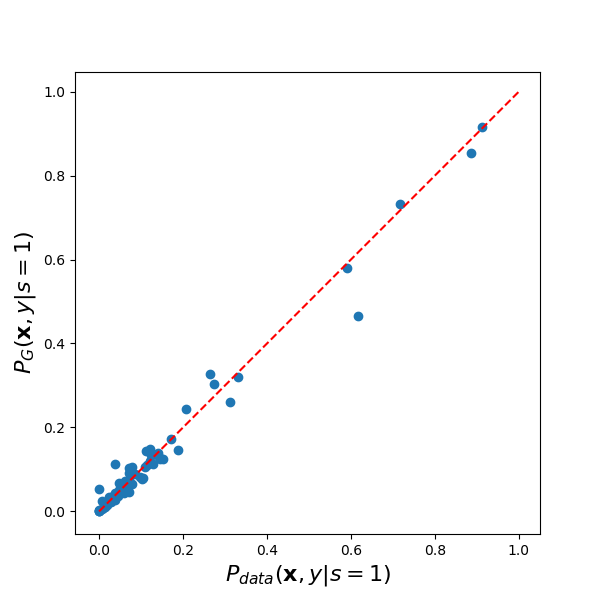}
		\caption{SYN2-NFGANI: \\$P_{\text{data}}(\mathbf{x},y|s=1)$ vs.\\  $P_G(\mathbf{x},y|s=1)$}
		\label{fig:ngan_s1}
	\end{subfigure}\hfil 
	\begin{subfigure}{0.15\textwidth}
		\includegraphics[width=\linewidth]{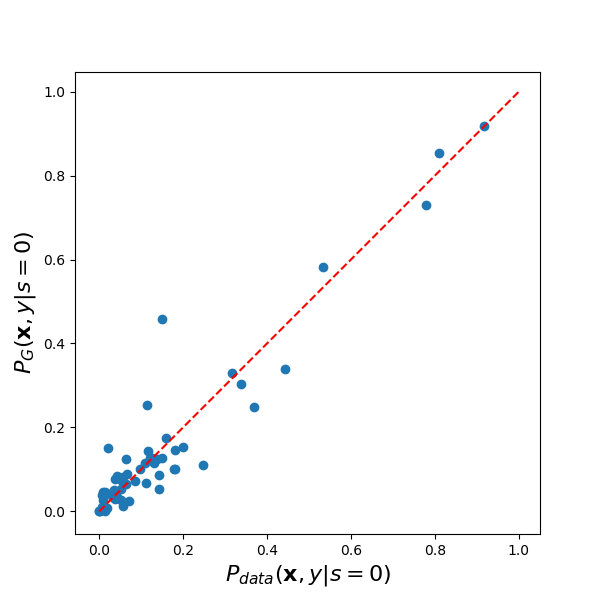}
		\caption{SYN2-NFGANI:\\ $P_{\text{data}}(\mathbf{x},y|s=0)$ vs.\\  $P_G(\mathbf{x},y|s=0)$}
		\label{fig:ngan_s0}
	\end{subfigure}

	\medskip
	\begin{subfigure}{0.15\textwidth}
		\includegraphics[width=\linewidth]{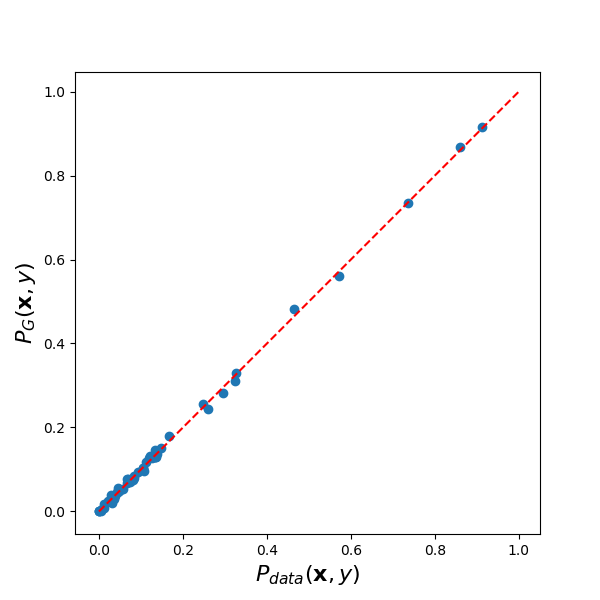}
		\caption{SYN3-NFGANII:\\ $P_{\text{data}}(\mathbf{x},y)$ vs. $P_G(\mathbf{x},y)$ \\~}
		\label{fig:nganii_join}
	\end{subfigure}\hfil 
	\begin{subfigure}{0.15\textwidth}
		\includegraphics[width=\linewidth]{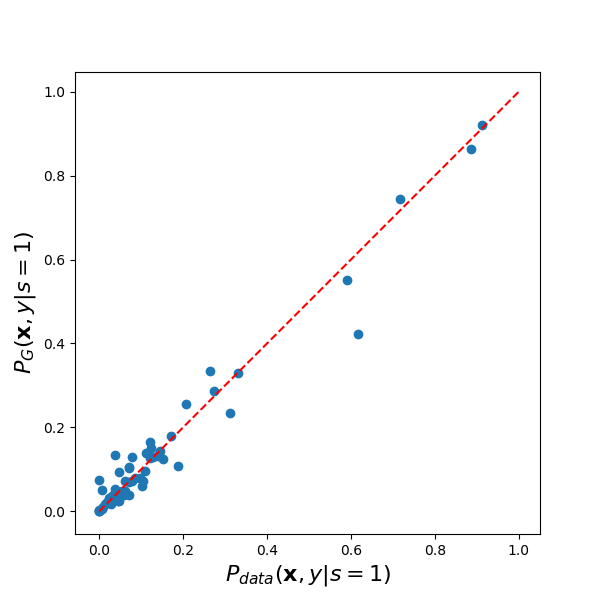}
		\caption{SYN3-NFGANII: \\$P_{\text{data}}(\mathbf{x},y|s=1)$ vs.\\  $P_G(\mathbf{x},y|s=1)$}
		\label{fig:nganii_s1}
	\end{subfigure}\hfil 
	\begin{subfigure}{0.15\textwidth}
		\includegraphics[width=\linewidth]{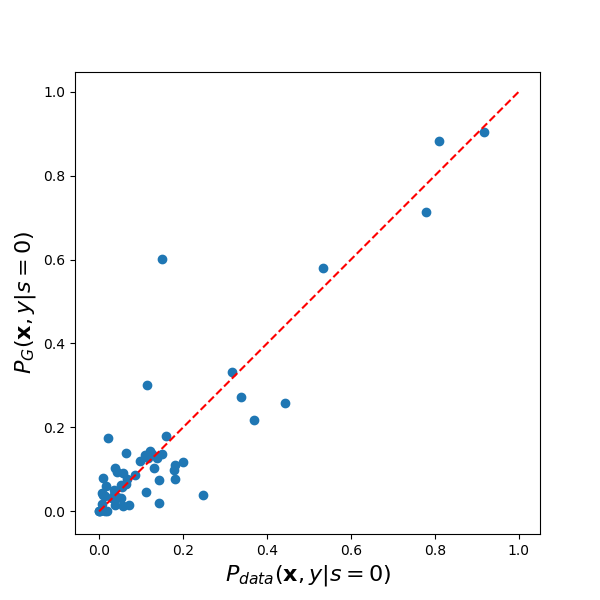}
		\caption{SYN3-NFGANII:\\ $P_{\text{data}}(\mathbf{x},y|s=0)$ vs.\\  $P_G(\mathbf{x},y|s=0)$}
		\label{fig:nganii_s0}
	\end{subfigure}
	
	\medskip
	\begin{subfigure}{0.15\textwidth}
		\includegraphics[width=\linewidth]{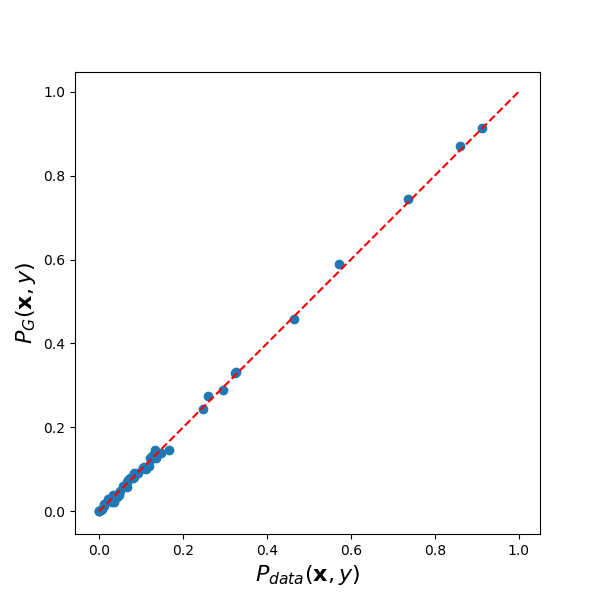}
		\caption{SYN4-FairGAN:\\ $P_{\text{data}}(\mathbf{x},y)$ vs. $P_G(\mathbf{x},y)$ \\~}
		\label{fig:fairgan_join}
	\end{subfigure}\hfil 
	\begin{subfigure}{0.15\textwidth}
		\includegraphics[width=\linewidth]{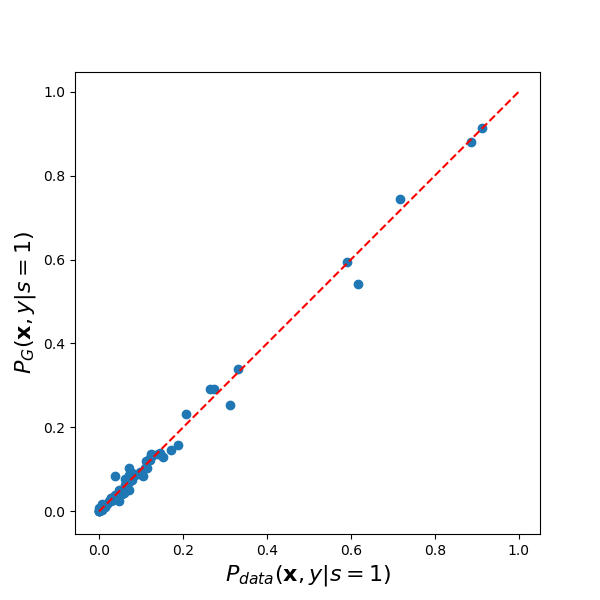}
		\caption{SYN4-FairGAN:\\ $P_{\text{data}}(\mathbf{x},y|s=1)$ vs.\\ $ P_G(\mathbf{x},y|s=1)$}
		\label{fig:fairgan_s1}
	\end{subfigure}\hfil 
	\begin{subfigure}{0.15\textwidth}
		\includegraphics[width=\linewidth]{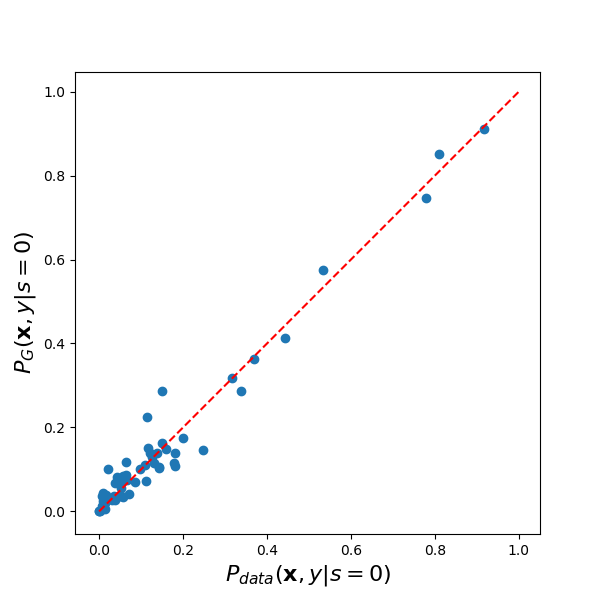}
		\caption{SYN4-FairGAN:\\ $P_{\text{data}}(\mathbf{x},y|s=0)$ vs.\\ $P_G(\mathbf{x},y|s=0)$}
		\label{fig:fairgan_s0}
	\end{subfigure}
	\caption{Dimension-wise probability distributions. Each dot represents one attribute. The x-axis represents the Bernoulli success probability for the real dataset. The y-axis represents the probability for the synthetic dataset generated by each model. The diagonal line indicates the ideal case, where the real and synthetic data show identical quality.}
	\label{fig:dwp}
\end{figure}

{\bf \noindent Fairness.}
We adopt the \textit{risk difference in a labeled dataset} ($disc(\mathcal{D}) = P(y=1|s=1)-P(y=1|s=0)$) as the metric to compare the performance of different GAN models on fair data generation. Table \ref{tbl:cjpd} shows the risk differences in the real and synthetic datasets. The risk difference in the Adult dataset is 0.1989, which indicates discrimination against female. The SYN-GAN, which is trained to be close to the real dataset, has the similar risk difference to the real dataset. On the contrary, SYN2-NFGANI, SYN3-NFGANII, and SYN4-FairGAN have lower risk differences than the real dataset. In particular, both SYN2-NFGANI and SYN3-NFGANII have extremely small risk differences. This is because the protected attribute of SYN2-NFGANI and SYN3-NFGANII is independently assigned, i.e., $\hat{y} \independent \hat{s}$. Hence, the synthetic datasets from SYN2-NFGANI and SYN3-NFGANII are free from disparate treatment. FairGAN prevents the disparate treatment by generating revised $\hat{y}$ to make $\hat{y} \independent \hat{s}$.
The risk difference of SYN4-FairGAN is 0.0411, which shows the effectiveness of FairGAN on fair data generation.  


In Figure \ref{fig:dwcp}, we compare the dimension-wise conditional probability distributions between $P(\mathbf{x},y|s=1)$ and $P(\mathbf{x},y|s=0)$. Each dot indicates one attribute. The diagonal line indicates the ideal fairness, where the conditional probability distributions of each attribute given $s = 1$ and $s = 0$ are identical. We can observe that the dimension-wise distributions of datasets with lower risk differences are closer to the diagonal line. For example, dimension-wise conditional probabilities of real dataset and SYN1-GAN are spread around the diagonal line (shown in Figures \ref{fig:f0} and \ref{fig:f1}), while conditional probabilities of SYN2-NFGANI and SYN3-NFGANII with the lowest risk difference are just on the diagonal line (shown in Figure \ref{fig:f2}). SYN4-FairGAN also achieves reasonable risk difference, so the attribute dots are close to the diagonal line. Overall, the synthetic datasets from SYN2-NFGANI, SYN3-NFGANII, and SYN4-FairGAN can prevent the disparate treatment.

We further evaluate the $\epsilon$-fairness (disparate impact) by calculating the balanced error rates (BERs) in the real data and SYN4-FairGAN. Because the protected attribute in SYN2-NFGANI and SYN3-NFGANII are randomly assigned, the real $s$ given  $\hat{\mathbf{x}}$ is unknown. The BERs in SYN2-NFGANI and SYN3-NFGANII cannot be calculated. The BER in the real dataset is 0.1538, which means a classifier can predict $s$  given $\mathbf{x}$ with high accuracy. Hence, there is disparate impact in the real dataset. On the contrary, the BER in SYN4-FairGAN is 0.3862$\pm$0.0036, which indicates using the generated $\hat{\mathbf{x}}$ in SYN4-FairGAN to predict the real $s$ has much higher error rate. The disparate impact in SYN4-FairGAN is small. It shows the effectiveness of FairGAN on removal of the disparate impact in terms of the real $s$. Note that we adopt a linear SVM as a classifier to predict $s$.

{\bf \noindent Utility.}
We then evaluate the data utility of synthetic datasets. We adopt the dimension-wise probability to check whether the generated data have the similar distribution to the real data on each dimension. Figure \ref{fig:dwp} compares dimension-wise probability distributions of different GAN models in both joint probability $P(\mathbf{x}, y)$ and conditional probability $P(\mathbf{x},y|s)$. From Figures \ref{fig:gan_join}, \ref{fig:ngan_join}, \ref{fig:nganii_join} and \ref{fig:fairgan_join}, we can observe that the four synthetic datasets generated by different GAN models have similar $P(\mathbf{x}, y)$ to the real dataset. Meanwhile, $P_G(\mathbf{x},y|s=1)$ and $P_G(\mathbf{x},y|s=0)$ on SYN1-GAN perfectly match the real dataset (shown in Figures \ref{fig:gan_s1} and \ref{fig:gan_s0}), which indicates the effectiveness of the regular GAN model on data generation. We can also observe that SYN4-FariGAN better preserves $P(\mathbf{x},y|s)$ than SYN2-NFGANI and SYN3-NFGANII by comparing the Figures \ref{fig:fairgan_s1} and \ref{fig:fairgan_s0} with Figures \ref{fig:ngan_s1}, \ref{fig:ngan_s0}, \ref{fig:nganii_s1} and \ref{fig:nganii_s0}. This is because neither Na\"iveFairGAN-I nor Na\"iveFairGAN-II ensures the generated samples have the same conditional probability distribution given $s$ as the real data.

In Table \ref{tbl:jpd}, we further evaluate the closeness between each synthetic dataset and the real dataset by calculating the Euclidean distance of joint and conditional probabilities ($P(\mathbf{x},y)$, $P(\mathbf{x},y,s)$, and $P(\mathbf{x},y|s)$). 
The Euclidean distance is calculated between the estimated probability vectors (probability mass function) on the sample space from the synthetic dataset and the real dataset. A smaller distance indicates better closeness between the real data and the synthetic data. As expected, SYN1-GAN has the smallest distance to the real dataset for joint and conditional probabilities. For synthetic datasets generated by FairGAN and Na\"iveFairGAN models, SYN2-NFGANI has the smallest distance in terms of $||P_{\text{data}}(\mathbf{x},y)-P_G(\mathbf{x},y)||_2$ since its objective is $P_G (\mathbf{x},y)=P_{\text{data}}(\mathbf{x},y)$, while SYN4-FairGAN has the smallest distance in terms of conditional probability $||P_{\text{data}}(\mathbf{x},y|s)-P_G(\mathbf{x},y|s)||_2$ and joint probability $||P_{\text{data}}(\mathbf{x},y,s)-P_G(\mathbf{x},y,s)||_2$ since only FairGAN aims to ensure $P_G (\mathbf{x},y,s)=P_{\text{data}}(\mathbf{x},y,s)$. Overall, without considering the protected attribute, all the synthetic datasets from FairGAN and Na\"iveFairGAN models are close to the real dataset. When considering the protected attribute, FairGAN has better performance than Na\"iveFairGAN models. Therefore, after  removing disparate impact, FairGAN  still achieves good data utility. 

\begin{table}[h]
	\centering
	\caption{Euclidean distances of joint and conditional probabilities between synthetic datasets and real dataset}
	\label{tbl:jpd}
	\resizebox{\columnwidth}{!}{%
\begin{tabular}{|l|c|c|c|c|}
\hline
Euclidean Distance                                                                                     & SYN1-GAN  & SYN2-NFGANI & SYN3-NFGANII & SYN4-FairGAN \\ \hline
\begin{tabular}[c]{@{}l@{}}$||P_{\text{data}}(\mathbf{x},y)$\\ $-P_G(\mathbf{x},y)||_2$\end{tabular}     & 0.0231$\pm$0.0003                                              & 0.0226$\pm$0.0003                                                    & 0.0227$\pm$0.0003                                                    & 0.0233$\pm$0.0004                                                  \\ \hline
\begin{tabular}[c]{@{}l@{}}$||P_{\text{data}}(\mathbf{x},y|s=1)$\\ $-P_G(\mathbf{x},y|s=1)||_2$\end{tabular} & 0.0108$\pm$0.0002                                              & 0.0118$\pm$0.0003                                                    & 0.0118$\pm$0.0005                                                    & 0.0111$\pm$0.0004                                                  \\ \hline
\begin{tabular}[c]{@{}l@{}}$||P_{\text{data}}(\mathbf{x},y|s=0)$\\ $-P_G(\mathbf{x},y|s=0)||_2$\end{tabular} & 0.0166$\pm$0.0002                                              & 0.0194$\pm$0.0003                                                    & 0.0200$\pm$0.0007                                                    & 0.0176$\pm$0.0005  \\ \hline
\begin{tabular}[c]{@{}l@{}}$||P_{\text{data}}(\mathbf{x},y,s)$\\ $-P_G(\mathbf{x},y,s)||_2$\end{tabular} & 0.0198$\pm$0.0002                                              & 0.0227$\pm$0.0003                                                    & 0.0232$\pm$0.0006                                                    & 0.0208$\pm$0.0005                                                  \\ \hline
\end{tabular}
}
\end{table}

\begin{table*}[h]
\centering
\caption{Risk differences in classifiers and classification accuracies on various training and testing settings}
\label{tb:rd-accuracy}
\resizebox{\textwidth}{!}{%
\begin{tabular}{|l|c|c|c|c|c|c|c|c|c|c|}
\hline
\multirow{2}{*}{}                                                          & \multirow{2}{*}{Classifier} & REAL2REAL & \multicolumn{4}{c|}{SYN2SYN}                                                                                                                                                                                                          & \multicolumn{4}{c|}{SYN2REAL}                                                                                                                                                                                                         \\ \cline{3-11} 
                                                                           &                             &           & \begin{tabular}[c]{@{}c@{}}SYN1-\\ GAN\end{tabular} & \begin{tabular}[c]{@{}c@{}}SYN2-\\ NFGANI\end{tabular} & \begin{tabular}[c]{@{}c@{}}SYN3-\\ NFGANII\end{tabular} & \begin{tabular}[c]{@{}c@{}}SYN4-\\ FairGAN\end{tabular} & \begin{tabular}[c]{@{}c@{}}SYN1-\\ GAN\end{tabular} & \begin{tabular}[c]{@{}c@{}}SYN2-\\ NFGANI\end{tabular} & \begin{tabular}[c]{@{}c@{}}SYN3-\\ NFGANII\end{tabular} & \begin{tabular}[c]{@{}c@{}}SYN4-\\ FairGAN\end{tabular} \\ \hline
\multirow{3}{*}{\begin{tabular}[c]{@{}l@{}}Risk\\ Difference\end{tabular}} & SVM (Linear)                & 0.1784    & 0.1341$\pm$0.0023                                              & 0.0018$\pm$0.0021                                                    & 0.0073$\pm$0.0039                                                    & 0.0371$\pm$0.0189                                                  & 0.1712$\pm$0.0062                                              & 0.1580$\pm$0.0076                                                    & 0.1579$\pm$0.0079                                                   & 0.0461$\pm$0.0424                                                  \\ \cline{2-11} 
                                                                           & SVM (RBF)                   & 0.1788    & 0.1292$\pm$0.0049                                              & 0.0018$\pm$0.0025                                                    & 0.0074$\pm$0.0028                                                    & 0.0354$\pm$0.0206                                                  & 0.1623$\pm$0.0050                                              & 0.1602$\pm$0.0053                                                    & 0.1603$\pm$0.0087                                                    & 0.0526$\pm$0.0353                                                  \\ \cline{2-11} 
                                                                           & Decision Tree               & 0.1547    & 0.1396$\pm$0.0089                                              & 0.0015$\pm$0.0035                                                    & 0.0115$\pm$0.0061                                                    & 0.0535$\pm$0.0209                                                  & 0.1640$\pm$0.0077                                              & 0.1506$\pm$0.0070                                                    & 0.1588$\pm$0.0264                                                    & 0.0754$\pm$0.0641                                                  \\  \hline \hline
\multirow{3}{*}{Accuracy}                                                  & SVM (Linear)                & 0.8469    & 0.8281$\pm$0.0103                                              & 0.8162$\pm$0.0133                                                    & 0.0.8226$\pm$0.0126                                                    & 0.8247$\pm$0.0115                                                  & 0.8363$\pm$0.0108                                              & 0.8340$\pm$0.0091                                                    & 0.8356$\pm$0.0018                                                    & 0.8217$\pm$0.0093                                                  \\ \cline{2-11} 
                                                                           & SVM (RBF)                   & 0.8433    & 0.8278$\pm$0.0099                                             &  0.8160$\pm$0.0100   & 0.8215$\pm$0.0130                                                    & 0.8233$\pm$0.0103                                                 & 0.8342$\pm$0.0036                                               & 0.8337$\pm$0.0060                                                   & 0.8349$\pm$0.0012                                                    & 0.8178$\pm$0.0128                                                  \\ \cline{2-11} 
                                                                           & Decision Tree               & 0.8240    & 0.8091$\pm$0.0059                                              & 0.7926$\pm$0.0083                                                    & 0.8055$\pm$0.0102                                                    & 0.8077$\pm$0.0144                                                  & 0.8190$\pm$0.0051                                              & 0.8199$\pm$0.0041                                                    & 0.8158$\pm$0.0069                                                    & 0.8044$\pm$0.0140                                                  \\ \hline
\end{tabular}
}
\end{table*}

\subsection{Fair Classification}
In this subsection, we adopt the real and synthetic datasets to train several classifiers and check whether the classifiers can achieve fairness. We evaluate the classifiers with three settings: 1) the classifiers are trained and tested on the real dataset, called \textbf{REAL2REAL}; 2) the classifiers are trained and tested on the synthetic datasets, called \textbf{SYN2SYN}; 3) the classifiers are trained on the synthetic datasets and tested on the real dataset, called \textbf{SYN2REAL}. The ratio of the training set to testing set in these three settings is 1:1. We emphasize that only SYN2REAL is meaningful in practice as the classifiers are trained from the generated data and are adopted for decision making on the real data. 

We adopt the following classifiers to evaluate the fair classification: 1) \textbf{SVM (linear)} which is a linear support vector machine with $C=1$; 2) \textbf{SVM (RBF)} which is a support vector machine with the radial basis kernel function; 3) \textbf{Decision Tree} with maximum tree depth as 5; Note that we do not adopt the protected attribute and only use the unprotected attributes to train classifiers, which ensures there is no disparate treatment in classifiers. 

{\bf \noindent Fairness.}
We adopt the risk difference in a classifier ($disc(\eta) = P(\eta(\mathbf{x})=1|s=1)-P(\eta(\mathbf{x})=1|s=0)$) to evaluate the performance of classifier on fair prediction. Table \ref{tb:rd-accuracy} shows the risk differences in classifiers on various training and testing settings. We can observe that when the classifiers are trained and tested on real datasets (i.e., REAL2REAL), the risk differences in classifiers are high. It indicates that if there is disparate impact in the training dataset, the classifiers also incur discrimination for prediction. Since SYN1-GAN is close to the real dataset, classifiers trained on SYN1-GAN also have discrimination in both SYN2SYN and SYN2REAL settings. 

Although SYN2-NFGANI and SYN3-NFGANII have similar distributions as the real dataset on unprotected attributes and decision, i.e., $P_G(\mathbf{x}, y)=P_{\text{data}}(\mathbf{x}, y)$, classifiers which are trained and tested in SYN2SYN settings achieve low risk differences. This is because the values of the protected attribute in SYN2-NFGANI or SYN3-NFGANII are independently generated or shuffled. Since both $\hat{\mathbf{x}}$ and  $\hat{y}$ have no correlations with the generated $\hat{s}$ in SYN2-NFGANI and SYN3-NFGANII, the statistical parity in classifiers can be achieved when trained and tested on synthetic datasets. 

However, when classifiers are trained on SYN2-NFGANI and SYN3-NFGANII and tested on the real dataset (i.e., SYN2REAL), the classifiers still have significant discrimination against the protected group. Because the unprotected attributes of SYN2-NFGANI and SYN3-NFGANII are close to the real dataset, the correlations between the generated $\hat{\mathbf{x}}$ and the real $s$ are still preserved. The disparate impact in terms of the real $s$ on SYN2-NFGANI and SYN3-NFGANII is not removed. When classifiers are tested on the real dataset where the correlations between $\mathbf{x}$ and $s$ are preserved, the classification results indicate discrimination. On the contrary, when the classifiers are trained on SYN4-FairGAN and tested on the real dataset, we can observe that the risk differences in classifiers are small. Since the FairGAN prevents the discrimination by generating  $\hat{\mathbf{x}}$ that don't have correlations with the real $s$, i.e. free from the disparate impact, the classifier trained on SYN4-FairGAN can achieve fair classification on the real dataset. It demonstrates the advantage of FairGAN over the Na\"iveFairGAN models on fair classification.

{\bf \noindent Classification accuracy.}
Table \ref{tb:rd-accuracy} further shows the classification accuracies of different classifiers on various training and testing settings. We can observe that the accuracies of classifiers on the SYN2REAL setting are close to the results on the REAL2REAL setting. It indicates synthetic datasets generated by different GAN models are similar to the real dataset, showing the good data generation utility of GAN models.  Meanwhile, accuracies of classifiers which are trained on SYN4-FairGAN and tested on real dataset are only slightly lower than those trained on SYN1-GAN, which means the FairGAN model can achieve a good balance between utility and fairness. The small utility loss is caused by modifying unprotected attributes to remove disparate impact in terms of the real $s$.

\subsection{Parameter Sensitivity}
\begin{figure}[h!]
	\centering
	\begin{subfigure}{.23\textwidth}
		\includegraphics[width=\linewidth]{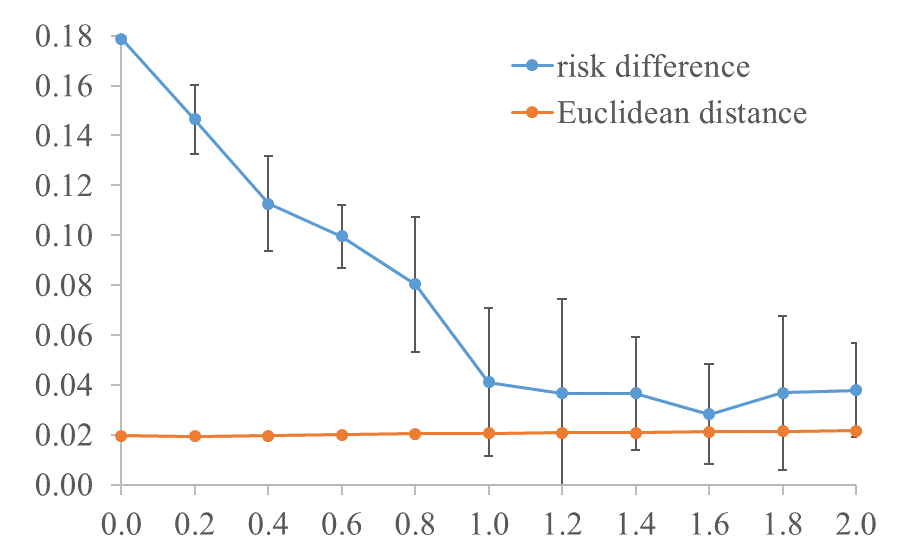}
		\caption{Utility and fairness in synthetic datasets from FairGAN with various $\lambda$. \\ ~}
		\label{fig:lambda_data}
	\end{subfigure}\hfil
	\begin{subfigure}{.23\textwidth}
		\includegraphics[width=\linewidth]{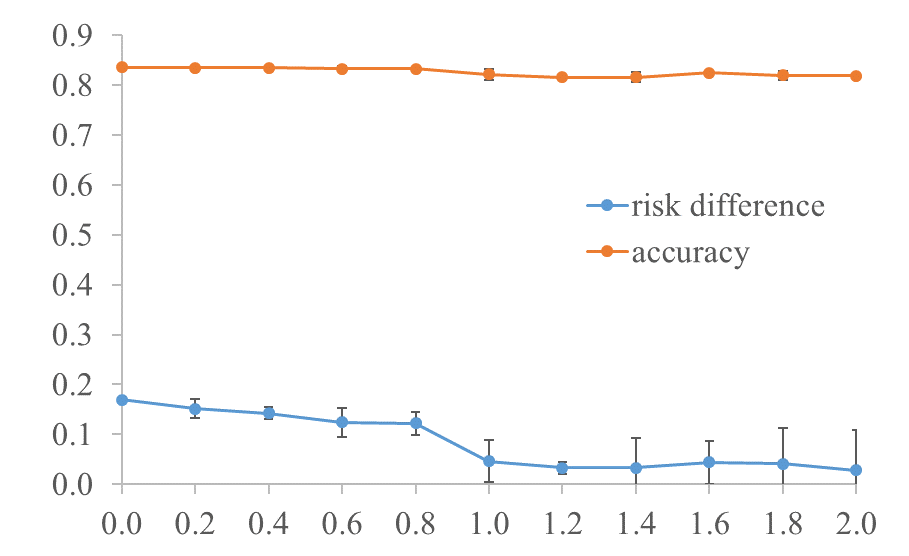}
		\caption{Accuracy and fairness in a linear SVM which is trained on synthetic datasets from FairGAN with various $\lambda$ and tested on real dataset.}
		\label{fig:lambda_classfication}
	\end{subfigure}
	\caption{The sensitivity analysis of FairGAN with various $\lambda$}
\end{figure}

We evaluate how the $\lambda$ in FairGAN affects the synthetic datasets for fair data generation and fair classification. For fair data generation, we evaluate risk differences of the generated datasets and the Euclidean distances of joint probabilities $||P_{\text{data}}(\mathbf{x},y,s)-P_{G}(\mathbf{x},y,s)||_2$ between real and synthetic datasets. From Figure \ref{fig:lambda_data}, we can observe that the risk differences of the generated datasets decrease significantly when $\lambda$ increases. Meanwhile, the Euclidean distances of joint probabilities $||P_{\text{data}}(\mathbf{x},y,s)-P_{G}(\mathbf{x},y,s)||_2$ keep steady with slightly increases while $\lambda$ changes from 0 to 2. Meanwhile, the standard deviations of Euclidean distances with various $\lambda$ are smaller than $10^{-3}$. Overall, with the increase of $\lambda$ from 0 to 2, the discrimination in the synthetic datasets becomes smaller while data generation utility keeps steady. 

For fair classification, we train a linear SVM on different synthetic datasets generated by FairGAN with various $\lambda$ and evaluate on the real dataset. Figure \ref{fig:lambda_classfication} shows how the accuracies and risk differences vary with different $\lambda$ values. We can observe that the risk difference in SVM when predicting on the real dataset decreases as $\lambda$ increases. Meanwhile, the prediction accuracy keeps relatively steady with a slightly decrease. The standard deviations of accuracies with various $\lambda$ are smaller than $10^{-2}$. Overall, it indicates that increasing $\lambda$ can prevent the classification discrimination on the real dataset while achieving good classification utility.

\section{Conclusions and Future Work}
In this paper, we have developed FairGAN to generate fair data, which is free from disparate treatment and disparate impact in terms of  the real protected attribute, while retaining high data utility. As a result, classifiers trained on the generated fair data are not subject to discrimination when making decision of the real data. FairGAN consists of one generator and two discriminators. In particular, the generator generates fake samples conditioned on the protected attribute. One discriminator is trained to identify whether samples are real or fake, while the other discriminator is trained to distinguish whether the generated samples are from the protected group or unprotected group. The generator can generate fair data with high utility by playing the adversarial games with these two discriminators. The experimental results showed the effectiveness of FairGAN.  In future, we plan to improve FairGAN to make classifiers trained on synthetic datasets achieve equalized odds or equal opportunity on the real dataset besides statistical parity. We emphasize this is the first work to study the use GAN for generating fair data, which is different from pre-process approaches \cite{Kamiran2009Classifying,Kamiran2012Data,Zhang2017Achieving,Feldman2015Certifying} widely studied in fairness aware learning. In our future work, we will conduct comprehensive comparisons both theoretically and empirically in terms of fairness-utility tradeoff. 

\begin{acks}
This work was supported in part by NSF 1646654 and 1564250.
\end{acks}

\bibliographystyle{ACM-Reference-Format}
\bibliography{Remote}

\section*{Appendix}

{\bf \noindent Theoretical analysis of NaiveFairGAN-II}

For $G$ fixed, the optimal discriminators $D_1$ and $D_2$ by the value function $V(G_{Dec}, D_1, D_2)$ are

\resizebox{0.98\linewidth}{!}{
$
	D_1^*(\mathbf{x},y) = \frac{P_{\text{data}}(\mathbf{x},y|s=1)+P_{\text{data}}(\mathbf{x},y|s=0)} {P_{\text{data}}(\mathbf{x},y|s=1)+P_{\text{data}}(\mathbf{x},y|s=0)+P_{G}(\mathbf{x},y|s=1)+P_{G}(\mathbf{x},y|s=0)},
$
}

\[
	D_2^*(\mathbf{x},y) = \frac{P_{G}(\mathbf{x},y|s=1)}{P_{G}(\mathbf{x},y|s=1)+P_{G}(\mathbf{x},y|s=0)}
\]
 The proof procedure is a straightforward extension of the proof in Proposition \ref{prop:fairgan}.


Given $D_1^*$ and $D_2^*$, we can reformulate the minimax game of Na\"iveFairGAN-II as:

\resizebox{.98\linewidth}{!}{
\begin{minipage}{\linewidth}
\begin{align*}
&~C'(G)  = \max_{D_1,D_2} V(G_{Dec}, D_1, D_2) \\
      & = \mathbb{E}_{(\mathbf{x},y) \sim P_{\text{data}}(\mathbf{x},y|s=1)}[\log \frac{P_{\text{data}}(\mathbf{x},y|s=1)+P_{\text{data}}(\mathbf{x},y|s=0)} {P_{\text{data}}(\mathbf{x},y|s=1)+P_{\text{data}}(\mathbf{x},y|s=0)+P_{G}(\mathbf{x},y|s=1)+P_{G}(\mathbf{x},y|s=0)}] \\
	   & + \mathbb{E}_{(\mathbf{x},y) \sim P_{\text{data}}(\mathbf{x},y|s=0)}[\log \frac{P_{\text{data}}(\mathbf{x},y|s=1)+P_{\text{data}}(\mathbf{x},y|s=0)} {P_{\text{data}}(\mathbf{x},y|s=1)+P_{\text{data}}(\mathbf{x},y|s=0)+P_{G}(\mathbf{x},\hat{y}|s=1)+P_{G}(\mathbf{x},y|s=0)}] \\
	   & + \mathbb{E}_{(\mathbf{x},y) \sim P_{G}(\mathbf{x},y|s=1)}[\log \frac{P_{G}(\mathbf{x},y|s=1)+P_{G}(\mathbf{x},y|s=0)} {P_{\text{data}}(\mathbf{x},y|s=1)+P_{\text{data}}(\mathbf{x},y|s=0)+P_{G}(\mathbf{x},y|s=1)+P_{G}(\mathbf{x},y|s=0)}] \\
	   & + \mathbb{E}_{(\mathbf{x},y) \sim P_{G}(\mathbf{x},y|s=0)}[\log \frac{P_{G}(\mathbf{x},y|s=1)+P_{G}(\mathbf{x},y|s=0)} {P_{\text{data}}(\mathbf{x},y|s=1)+P_{\text{data}}(\mathbf{x},y|s=0)+P_{G}(\mathbf{x},y|s=1)+P_{G}(\mathbf{x},y|s=0)}] \\
	   & + \mathbb{E}_{(\mathbf{x},y) \sim P_{G}(\mathbf{x},y|s=1)} [\log \frac{P_{G}(\mathbf{x},y|s=1)} {P_{G}(\mathbf{x},y|s=1)+P_{G}(\mathbf{x},y|s=0)}] \\
	   & + \mathbb{E}_{(\mathbf{x},y) \sim P_{G}(\mathbf{x},y|s=0)} [\log \frac{P_{G}(\mathbf{x},y|s=0)} {P_{G}(\mathbf{x},y|s=1)+P_{G}(\mathbf{x},y|s=0)}] \\
\end{align*}
\end{minipage}
}

Follow the proof of GAN \cite{Goodfellow2014Generative}, $C'(G)$ can be rewritten as

\resizebox{0.99\linewidth}{!}{
\begin{minipage}{\linewidth}
\begin{align}
\label{eq:nfgan_cg}
	& ~C'(G) = -4 \log4 + 2 \cdot JSD(P_{G}(\mathbf{x},y|s=1)||P_{G}(\mathbf{x},y|s=0)) \notag \\ 
	& + 2JSD \Big(P_{\text{data}}(\mathbf{x},y|s=1)+P_{\text{data}}(\mathbf{x},y|s=0)||P_{G}(\mathbf{x},y|s=1)+P_{G}(\mathbf{x},y|s=0)\Big).   
\end{align}
\end{minipage}
}

Since Jensen-Shannon divergence between two distributions is non-negative and zero only when they are equal, the only solution of $C'(G)$ reaching the global minimum is that the two JSD terms in Equation \ref{eq:nfgan_cg} are both zeros, i.e., $P_{\text{data}}(\mathbf{x},y|s=1)+P_{\text{data}}(\mathbf{x},y|s=0) = P_{G}(\mathbf{x},y|s=1)+P_{G}(\mathbf{x},y|s=0)$ and $P_{G}(\mathbf{x},y|s=1) = P_{G}(\mathbf{x},y|s=0)$ simultaneously. Hence, the global minimum of Na\"iveFairGAN-II is $P_{G}(\mathbf{x},y|s=1) = P_{G}(\mathbf{x},y|s=0) = \big(P_{\text{data}}(\mathbf{x},y|s=1)+P_{\text{data}}(\mathbf{x},y|s=0)\big)/2$. At that point, $V(G_{Dec}, D_1, D_2)$ achieves the value $-4 \log4$.


\end{document}